\documentclass[hidelinks,onefignum,onetabnum]{siamart250211}

% SIAM Shared Information Template
% This is information that is shared between the main document and any
% supplement. If no supplement is required, then this information can
% be included directly in the main document.

% Packages and macros go here

\usepackage{amsmath}
\usepackage{amsfonts}
\usepackage{amssymb}
\usepackage{bm}
\usepackage{graphicx}
\usepackage{epstopdf}
\usepackage{algorithmic}
\usepackage{blindtext}
\usepackage{tikz}
\usepackage{pifont}
\usepackage{color}

\usepackage[numbers]{natbib}
\usepackage{lipsum}
\usepackage{dsfont}
\usepackage{arydshln}
\usepackage{extarrows}
\usepackage{circledsteps}
\usepackage{cleveref}
\sloppy

\newtheorem{thm}{\indent Theorem}

\newtheorem{lem}[thm]{\indent Lemma}

\newtheorem{dfn}{{\indent\bf Definition}} 

\ifpdf
  \DeclareGraphicsExtensions{.eps,.pdf,.png,.jpg}
\else
  \DeclareGraphicsExtensions{.eps}
\fi

% Add a serial/Oxford comma by default.

% Used for creating new theorem and remark environments
\newsiamremark{remark}{Remark}
\newsiamremark{hypothesis}{Hypothesis}
\crefname{hypothesis}{Hypothesis}{Hypotheses}
\newsiamthm{claim}{Claim}
\newsiamremark{fact}{Fact}
\crefname{fact}{Fact}{Facts}

% Sets running headers as well as PDF title and authors
\headers{Generalization Bounds for Equivariant Networks on Markov Data}{Hui Li, Zhiguo Wang, Bohui Chen and Li Sheng}

% Title. If the supplement option is on, then "Supplementary Material"
% is automatically inserted before the title.
\title{Generalization Bounds for Equivariant Networks on Markov Data\thanks{Submitted to the editors DATE.
\funding{This work was funded in part by
the National Key Research and Development Program of China under Grant
2020YFA0714003 and in part by the National Natural Science Foundation
of China under Grant 62203313.}}}

% Authors: full names plus addresses.
\author{
  Hui Li\thanks{College of Mathematics, Sichuan University, China \& College of Mathematics and Statistics, Sichuan University of Science and Engineering, China (\email{lihui1878293@163.com}).}
  \and
  Zhiguo Wang\thanks{College of Mathematics, Sichuan University, China (\email{wangzg315@126.com}, \email{bohui@cs.wisc.edu}, \email{lsheng@scu.edu.cn}).Corresponding author: Zhiguo Wang.}
  \and
  Bohui Chen\footnotemark[3]
  \and
  Li Sheng\footnotemark[3]  
}

\usepackage{amsopn}

%%% Local Variables: 
%%% mode:latex
%%% TeX-master: "ex_article"
%%% End: 

\ifpdf
\hypersetup{
  pdftitle={Generalization Bounds for Equivariant Networks on Markov Data},
  pdfauthor={Hui Li, Zhiguo Wang, Bohui Chen, and Li Sheng}
}
\fi

\begin{document}

\maketitle

\begin{abstract}
Equivariant neural networks play a pivotal role in analyzing datasets with symmetry properties, particularly in complex data structures. However, integrating equivariance with Markov properties presents notable challenges due to the inherent dependencies within such data. Previous research has primarily concentrated on establishing generalization bounds under the assumption of independently and identically distributed data, frequently neglecting the influence of Markov dependencies.  In this study, we investigate the impact of Markov properties on generalization performance alongside the role of equivariance within this context. We begin by applying a new McDiarmid's inequality to derive a generalization bound for neural networks trained on Markov datasets, using Rademacher complexity as a central measure of model capacity. Subsequently, we utilize group theory to compute the covering number under equivariant constraints, enabling us to obtain an upper bound on the Rademacher complexity based on this covering number. This bound provides practical insights into selecting low-dimensional irreducible representations, enhancing generalization performance for fixed-width equivariant neural networks.
\end{abstract}

\begin{keywords}
Equivariant neural networks; generalization bound; covering number; Markov data.
\end{keywords}

\begin{MSCcodes}
68Q25
\end{MSCcodes}

\section{Introduction}
In recent years, deep neural networks have made significant strides across diverse applications in artificial intelligence and data science \cite{lecun2015deep, he2021deep, scarabosio2022deep, chen2024deep}. A central challenge within statistical learning theory has been understanding the generalization ability of these models \cite{vapnik1998statistical,xiao2022stability,bartlett2017spectrally,neyshabur2017pac,xiao2023pac,koltchinskii2002empirical,truong2022generalization}. Vapnik's foundational work in \cite{vapnik1998statistical} introduced methods for bounding generalization error based on the VC dimension of the function class. Later, \cite{koltchinskii2002empirical} expanded on this by providing probabilistic upper bounds for the generalization error in complex model combinations, including deep neural networks. More recent advancements include \cite{bartlett2017spectrally}, which introduced a spectral normalized margin bound, and \cite{neyshabur2017pac}, which utilized PAC-Bayes theory to derive similar bounds.

While much of this research focuses on i.i.d. datasets, many practical applications involve correlated data, as seen in speech, handwriting, gesture recognition, and bioinformatics. Time-series data with stationary distributions, such as those from Markov Chain Monte Carlo (MCMC), finite-state random walks, or graph-based random walks, serve as key examples. These dependencies necessitate new approaches for evaluating the generalization error in networks trained on non-i.i.d. data. Recent studies, such as \cite{truong2022generalization}, offer probabilistic upper bounds for neural networks trained on Markov data, extending earlier work by \cite{koltchinskii2002empirical}. This is crucial for understanding how neural networks perform in real-world scenarios where data exhibits inherent dependencies.

\begin{figure}[t]
  \centering
  \includegraphics[width=0.6\linewidth]{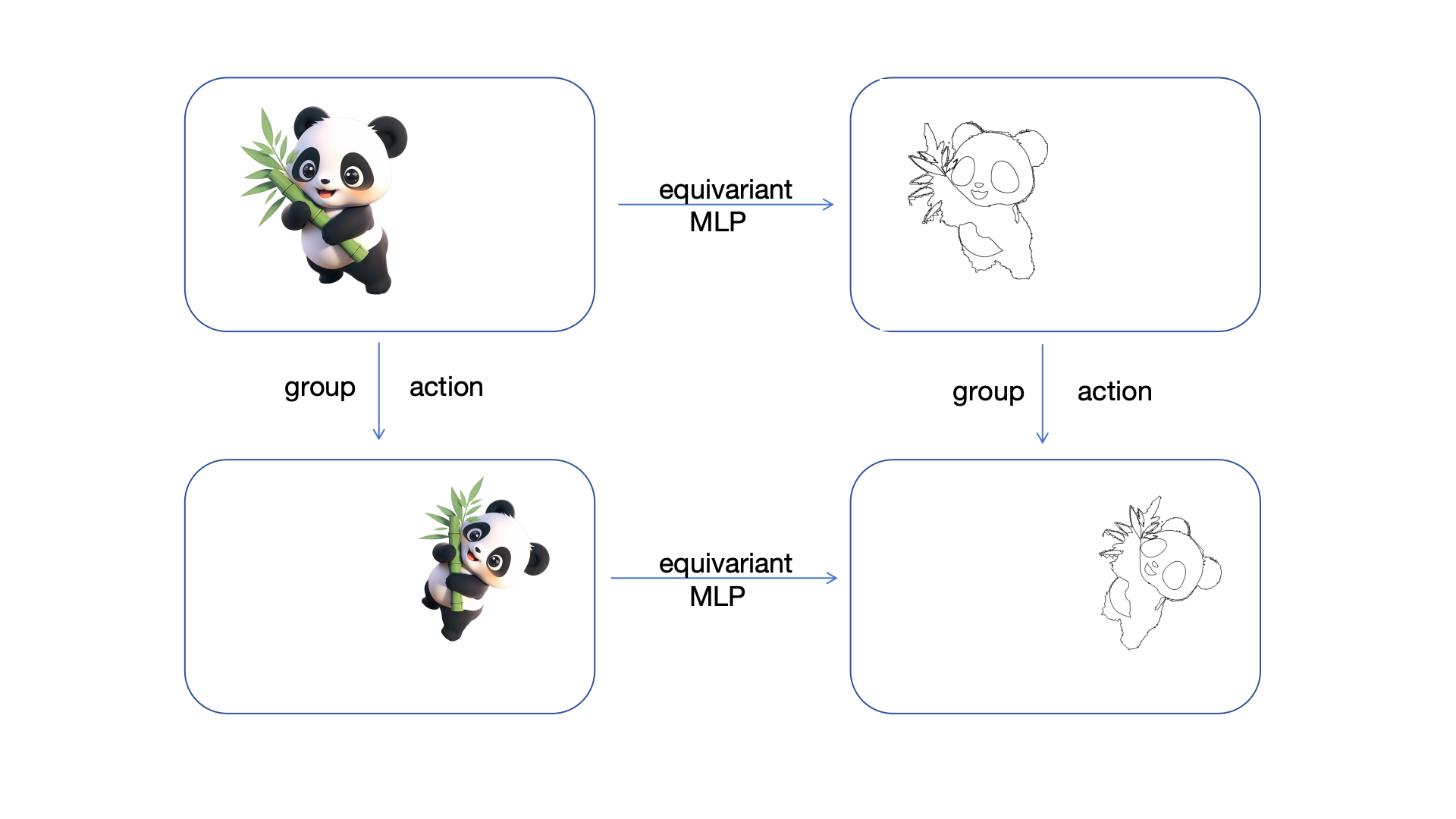}
  \caption{An equivariant neural network preserves transformations in the input, generating consistent outputs under rotation and scaling}
  \label{fig:1}
\end{figure}
 In this context, equivariant neural networks have emerged as a promising approach, leveraging symmetries in data to make models more efficient and robust. Fig.~\ref{fig:1} illustrates how an equivariant network preserves the relationship between input and output under rotational and scaling transformations, making it particularly valuable in fields like fluid dynamics \cite{wang2020incorporating}, molecular dynamics \cite{atz2021geometric}, particle physics \cite{bogatskiy2020lorentz}, robotics \cite{zhu2022sample, huang2022equivariant}, and reinforcement learning \cite{van2020mdp, wang2022so}. A key question here is whether such inductive biases can improve generalization. For finite groups, \cite{sokolic2017generalization} shows that invariant algorithms can achieve generalization errors up to $\sqrt{|G|}$ smaller than those of non-invariant algorithms. This idea is further explored by \cite{elesedy2021provably} and \cite{behboodi2022pac}, particularly for i.i.d. data.

Despite the growing interest in equivariant networks, their theoretical understanding under Markov data remains limited. This paper addresses this gap by deriving a novel generalization bound for equivariant networks trained on Markov data, demonstrating that incorporating equivariance can significantly enhance generalization, even when data exhibits time-dependent characteristics.

To investigate these bounds, we face two key challenges:

\textbf{(1) Derive a Generalization Gap Bound Based on the Empirical Radem-acher Complexity for Markov Datasets.} In standard setups with i.i.d. data, empirical Rademacher complexity serves as an effective measure of a model’s capacity, helping to establish bounds on generalization error. For example, in \cite{mohri2018foundations}, the generalization bound for i.i.d. data takes the form:
\begin{equation}\label{eqn_gen_MLP}	
    \mathcal{R}_\gamma(f) \leq \widehat{\mathcal{R}}_\gamma(f) + 2\mathfrak{R}_S(\mathcal{F_\gamma}) + 3\sqrt{\frac{\ln(2/\delta)}{2n}},
\end{equation}
where $\mathcal{R}_\gamma(f)$ is the population risk, $\widehat{\mathcal{R}}_\gamma(f)$ is the empirical risk, and $\mathfrak{R}_S(\mathcal{F_\gamma})$ denotes the empirical Rademacher complexity of the function class.
However, this bound heavily relies on the i.i.d. assumption that do not hold in the case of Markov datasets, where dependencies between samples complicate the analysis. This issue is evident in two main aspects: first, McDiarmid's inequality, which applies under i.i.d. conditions, cannot be directly used here. Second, for i.i.d. data, we have
$$\mathbb{E}[h(z)]=\mathbb{E}_S\big[\widehat{ \mathbb{E}}_S[h(z)]\big],$$
where $\mathbb{E}[h(z)]$ is the expectation under the true data distribution and $\widehat{ \mathbb{E}}_S[h(z)]$ is the empirical expectation. However, with Markov data, $\mathbb{E}[h(z)]$ represents an expectation over the stationary distribution, whereas $\mathbb{E}_S\big[\widehat{ \mathbb{E}}_S[h(z)]\big]$ reflects expectations over the dataset, they are no longer equal.

\textbf{ (2) Derive a Tight Bound on the Empirical Rademacher Complexity for Equivariant Neural Networks.} Using the Dudley Entropy Integral formula, we can derive an upper bound on the Rademacher complexity by calculating the covering number of the neural network’s function space. In \cite{bartlett2017spectrally}, the upper bound on the covering number is obtained using Maurey’s sparsification lemma, which provides an effective estimate of the covering number for matrix products of the form $W_l X_{l-1}$, where $W_l$ denotes the weight matrix of the $l$-th layer and $X_{l-1}$ is the input to that layer. However, due to inherent symmetry properties, equivariant networks introduce additional structural constraints, expressed as:
$$
W_l \rho_{l-1}(g) = \rho_l(g) W_l,
$$
where $\rho_{l-1}(g)$ and $\rho_{l}(g)$ denote the group transformations used in the corresponding layer. This constraint effectively reduces the size of the function class, suggesting that the effective hypothesis space may be smaller and lead to improved generalization performance. Nevertheless, incorporating these structural constraints into the analysis of covering number bounds is challenging.

Indeed, addressing these two issues requires innovative adaptations of empirical process techniques and complexity measures to effectively handle both the dependencies within Markov datasets and the additional structural constraints imposed by the equivariance properties of the network.
\subsection{Main Contributions}

\begin{itemize}
    \item To derive a generalization gap bound based on the empirical Rademacher complexity for Markov datasets, we introduced a novel adaptation of McDiarmid’s inequality specifically for Markov chains and incorporated the concept of mixing time. This approach allowed us to establish a new generalization bound for neural networks trained on data generated by Markov processes, resulting in Theorem~\ref{thm 1}. Our findings highlight that when data exhibits Markov dependencies, the generalization performance is influenced by an additional term in the upper bound, which compensates for the discrepancy between the initial distribution 
$\nu$
 and the stationary distribution 
$\pi$ of the Markov chain.
\item To address the equivariance constraint on the weight matrix 
$W_l$, we applied the Peter-Weyl theorem for compact groups to represent $\rho_l(g)$ and 
$\rho_{l-1}(g)$ as block diagonal matrices composed of irreducible representations. Using Schur's lemma, we then established the block diagonal structure of 
$W_l$ that aligns with the equivariance properties. Finally, we employed Maurey's sparsification lemma to compute the covering number of the set $\mathcal{H'}_l$  capturing the reduced complexity due to the equivariant constraints. This result is formalized in Lemma~\ref{linear_covering}.
\item 
By employing the standard Dudley entropy integral, we upper-bound the empirical Rademacher complexity through the covering number of 
$L$-layer equivariant neural networks. Combining this result with Theorem~\ref{thm 1}, we establish a tight generalization bound for equivariant neural networks trained on data generated by Markov processes, as presented in Theorem~\ref{theorem_gb}. This bound provides valuable guidance for designing more effective equivariant neural networks: selecting different irreducible representations enhances generalization performance.
\end{itemize}
\subsection{Related Work}

\textbf{Equivariant Network:} Equivariant neural networks have made significant progress in recent years. Steerable CNNs \cite{cohen2022steerable} extended convolutional networks to $ O(2) $ equivariance, and further work expanded this to $ O(3) $ \cite{cohen2018spherical}. Frameworks for $ E(n) $-equivariant networks \cite{cesa2022program} and the e3nn library \cite{geiger2022e3nn} have been developed for 3D systems. Recent work has focused on affine symmetries, with studies like \cite{vlavcic2021affine} exploring affine symmetries and network identifiability, while affine-equivariant networks based on differential invariants \cite{li2024affine} and scale-adaptive networks \cite{shen2024efficient} improve versatility. Efficient learning methods and spatially adaptive networks were also proposed \cite{he2021efficient, he2022neural}.

In robotic manipulation, SE(3)-equivariant networks \cite{simeonov2022neural, huang2022equivariant} and reinforcement learning with SO(2)-equivariant models \cite{wang2022so} enhance task efficiency. Other advancements like Tax-pose \cite{pan2023tax}, SEIL \cite{jia2023seil}, and models like Gauge-equivariant transformers \cite{he2021gauge} and PDE-based CNNs \cite{smets2023pde} provide theoretical foundations.

Markov models in equivariant networks, such as hybrid CNN-Hidden Markov chains \cite{goumiri2023new}, and improvements to MDPs \cite{wang2022so}, showcase enhanced decision-making and data efficiency. Extensions in partial observability \cite{nguyen2023equivariant} strengthen the robustness of these approaches.

\textbf{Generalization error for equivariance:} 
Incorporating equivariance in neural networks has shown promise in improving generalization performance. \cite{sokolic2017generalization} demonstrated that symmetry enhances the learning ability of invariant classifiers, with further work exploring robust large-margin deep networks. \cite{elesedy2021provably1} and \cite{elesedy2022group} highlighted the generalization benefits of symmetry in kernel methods and PAC learning, respectively.

Additionally, \cite{lyle2020benefits} and \cite{bietti2019group} showed how invariance leads to better robustness and data efficiency, while \cite{elesedy2021provably} demonstrated the strict generalization advantages of equivariant models. \cite{zhu2021understanding} provided empirical evidence of these benefits, and \cite{behboodi2022pac} established a PAC-Bayesian bound for equivariant networks, quantifying their performance improvements. These studies collectively highlight that equivariance enhances both stability and generalization in neural networks.

\subsection{Outline}

The paper is organized as follows. Section~\ref{sec_pm} introduces key concepts and common notations used throughout the analysis. In Section~\ref{sec_GB}, we derive the generalization upper bound for MLPs trained on the Markov dataset. Section~\ref{sec_ENN} extends the analysis to equivariant neural networks, focusing on the derivation of generalization bounds. A critical part of this is calculating the covering number for linear models under equivariant constraints. In Section~\ref{sec:experiments}, we present experimental results validating the theoretical findings. Finally, Section~\ref{con} concludes the paper, summarizing the key contributions and implications of our work.
\section{Preliminaries and Notations}\label{sec_pm}

In this section, we establish the foundational notation and formalize key concepts that will serve as the basis for the rest of our discussion.

 \subsection{Equivariant Neural Network}
In addressing the multi-class classification problem,  we first consider a Multi-Layer Perceptron (MLP) with 
$L$ layers. The network consists of 
$L$ non-linear activation functions, denoted by $(\sigma_1,$ $\ldots, \sigma_{L})$, where each $\sigma_l$ is $ c_l $-Lipschitz continuous and  satisfies $\sigma_l(0)=0$. Each layer is associated with a weight matrix $W_l \in \mathbb{R}^{d_l\times d_{l-1}}$, where $l \in [L]$ and $[L]$ is denoted as the set $\{1,2,\ldots,L\}$. Whenever input data $x_1,\cdots,x_n\in \mathbb{R}^{d_0}$ are given, collect them as columns of a matrix $X\in \mathbb{R}^{d_0\times n}$. Let $ F_{\mathcal{W}} $ represent the function computed by the network and define the set of functions as follows:
\begin{equation}\label{Eq_NN}
 \mathcal F_X=\left\{	F_{\mathcal{W}}(X):= \sigma_LW_L(\sigma_{L-1}(W_{L-1}\cdots\sigma_1(W_1(X)),\|W_l\|_2\leq s_l,l\in [L]\right\}.
\end{equation}
Here, we use $ \|W\|_2 $ to denote the spectral norm of $ W $. In this paper, we also use $ \|W\|_F $ to refer to the Frobenius norm. Additionally, the $ (p,q) $-norm of $ W $, for $ W \in \mathbb{R}^{d \times m} $, is defined as:

\begin{equation*}
\| W\|_{p,q} := \left( \sum_{i=1}^m \left( \sum_{j=1}^d |w_{ij}|^p \right)^{\frac{q}{p}} \right)^{\frac{1}{q}}.
\end{equation*}

Traditional MLPs cannot guarantee to preserve symmetries in data, such as translation, rotation, or reflection. It encourages us to consider a more efficient equivariant neural network when dealing with structured data like images or 3D objects, in this case, spatial transformations are important. 

Equivariant neural networks are a class of neural networks designed to maintain the same transformation properties as the input data.  As shown in Figure~\ref{fig:1}, they ensure that if the input transforms (such as rotation, translation, or scaling), the output will be transformed correspondingly. Next, we provide some notations about equivariant neural networks. More details can be found in Appendix~\ref{R_representation}.

\textbf{Group representations and equivariance.}  Given a vector
space $\mathbb R^d$ and a compact group $G$, a representation is a group
homomorphism $\rho$ that maps a group element $g\in G$ to an invertible matrix $\rho(g)\in \mathbb R^{d\times d}$. Given two vector spaces $\mathbb R^{d_{l-1}}$ and $\mathbb R^{d_{l}}$ and corresponding representations $\rho_{l-1}$  and $\rho_l$, a function $f: \mathbb R^{d_{l-1}}\rightarrow \mathbb R^{d_{l}}$ is called equivariant if it commutes with the group action, namely
$$f(\rho_{l-1}(g)(x))=\rho_l(g)f(x), ~\forall ~x\in  \mathbb R^{d_{l-1}}.$$

\textbf{Equivariant neural network.}
If the weight matrix $W_l$ defined in (\ref{Eq_NN}) is equivariant w.r.t. the representations  $\rho_l(g)$ and $\rho_{l-1}(g)$ acting on its output and input, i.e.,
$$W_l\rho_{l-1}=\rho_lW_l.$$

Then we call it the equivariant neural network.  In this paper, let $ H_{\mathcal{W}} $ denote the function computed by the corresponding equivariant neural network, and the set of this functions is defined as follows
\begin{equation}\label{Eq_enn}
 \mathcal H_X=\big\{H_{\mathcal{W}}(X):=\{	F_{\mathcal{W}}(X), W_l\rho_{l-1}=\rho_lW_l\},\|W_l\|_2\leq s_l,l\in[L]\big\}.
\end{equation}

\subsection{Markov Datasets}\label{sec_Markov}
Let $\{z_i = (x_i, y_i)\}_{i=1}^{n}$ be uniformly ergodic a Markov chain on the state space $ Z = \mathcal{X} \times \mathcal{Y} $, with an initial distribution $\nu$, a transition kernel $ Q(z, dz') $, and a stationary distribution $ \pi $.

Denote by $ Q^t(z, \cdot) $ the $ t $-step transition kernel, and let $ \|\nu_1 - \nu_2\|_{\text{tv}} $ represent the total variation distance between two probability measures $\nu_1$ and $\nu_2$ on $ Z $.
Let $M<\infty$ and $\alpha\in[0,1)$. Then the transition kernel $Q$ with stationary distribution $\pi$ is called \emph{uniformly
ergodic} with $(\alpha, M)$ if one has for all $z\in Z$ that
\begin{align*}
    \|Q^t(z, \cdot) - \pi\|_{\text{tv}}\leq M\alpha^n, ~\forall~n\in \mathbb{N}.
\end{align*}
Additionally, the mixing time for some small $\epsilon>0$ is defined as
\begin{equation*}
t_{\text{mix}}(\varepsilon) := \min \left\{ t : \sup_{z \in Z} \|Q^t(z, \cdot) - \pi\|_{\text{tv}} \leq \varepsilon \right\}.
\end{equation*}
Since the Markov chain $\{z_n\}_{n\in \mathbb{N}}$ approximates
the desired distribution $\pi$, then we can always use it to compute the expectation of a function
\begin{equation*}
\mathcal S(h) = \int_Z h(z) \pi(dz).
\end{equation*} Let $P$ be the Markov operator with corresponding transition kernel $Q$, i.e.
\begin{equation*}
P h(z) = \int_{Z} h(z') \, Q(z, dz'), \quad z \in Z.
\end{equation*}
Then there exists an (absolute) $L_2$-spectral gap, if
\begin{align}\label{eqn_beta}
   \beta = \sup_{\|h\|_2=1}\|P(h)-\mathcal S(h)\|_2 < 1, 
\end{align}
where the $L_2$-spectral gap is given by $1 - \beta$ (see \cite{rudolf2011explicit}). 
\subsection{Generalization Bound}
 
 We consider a multi-class classification problem with $\mathcal{X} =\mathbb{ R}^{d_{0}}$ and $ \mathcal {Y}=\mathbb{ R}^k$.
The margin operator $\mathcal{M}:\mathbb{R}^k\times\{1,\cdots,k\}\to\mathbb{R}$
is defined as $\mathcal{M}(f(x), y) := f(x)_y -\max_{i\neq y}f(x)_i$. The margin loss function $l_\gamma:\mathcal{R}\to\mathcal{R}^+$ is given by:
\begin{equation*}
l_\gamma (t) = 
\begin{cases} 
	1 & t \leq 0, \\
	1 - \frac{t}{\gamma} & 0 < t < \gamma, \\
	0 & t \geq \gamma. 
\end{cases}
\end{equation*}
It is clear that $l_\gamma (t) \in [0, 1] $ and that $l_\gamma (\cdot)$ is $1/\gamma$-Lipschitz.
For any $\gamma>0$, we define the population margin loss for a multi-class classification function $f$ as $$\mathcal{R}_\gamma(f):=\mathbb{E}\big[l_\gamma(\mathcal{M}(f(x),y))\big].$$ 
 Given a sample $S:=\{(x_1,y_1),\ldots,(x_n,y_n)\}$, 
the empirical counterpart $\widehat{\mathcal{R}}_\gamma(f)$ of $\mathcal{R}_\gamma$ is defined as  
$$\widehat{\mathcal{R}}_\gamma:=\frac{1}{n}\sum_{i=1}^{n} l_\gamma\big(\mathcal{M}(f(x_i),y_i)\big).$$
For a function  $f:\mathbb{R}^{d_0}\to \mathbb{R}^k$ and any $\gamma>0$, we define a set of real-valued functions $\mathcal{F}_\gamma$ as
\begin{equation*}
		\mathcal{F}_\gamma:=\big\{(x,y)\to l_\gamma(\mathcal{M}(f(x),y):f\in\mathcal{F}\big\}.
\end{equation*}
Lastly, define the Rademacher complexity of $\mathcal{F}_\gamma$ over the sample $S$ as
\begin{equation*}
\mathfrak{R}_S(\mathcal{F_\gamma}) = \mathbb{E}_{\epsilon}\left[\sup_{h \in \mathcal{F}_\gamma} \frac{1}{n} \sum_{i=1}^n \epsilon_i h(x_i,y_i)\right],
\end{equation*}
where $\{\epsilon_i\}_{i=1}^n$ are i.i.d. Rademacher random variables taking values in $\{-1,+1\}$ with probability $\text{Pr}(\epsilon_i=-1)=\text{Pr}(\epsilon_i=1)=\frac{1}{2}$.

\section{Main Results}
The main results are presented in two parts. In subsection~\ref{sec_GB}, we derive an upper bound on the generalization errors for multilayer perceptrons (MLPs) trained on Markov datasets, extending the classical framework for bounding generalization errors in the context of i.i.d. datasets. Building on this foundation, Subsection~\ref{sec_ENN} focuses on deriving the Rademacher complexity for equivariant neural networks by adapting the method from \cite{bartlett2017spectrally}. We then apply the standard Dudley entropy integral to obtain a generalization bound for equivariant neural networks trained on data generated by Markov processes.
\subsection{Generalization Bound of MLP with Markov Datasets}\label{sec_GB}
In this subsection, we derive an upper bound on generalization errors for multilayer perceptrons (MLPs) trained on Markov datasets, building upon the classical approach for bounding generalization errors in the context of i.i.d. datasets. Compared to the i.i.d. case, we introduce the following lemma concerning Markov chains.
\label{sec:main}
\begin{theorem}\label{thm 1}
Let $\mathcal{F}_\gamma$ be a family of functions mapping from $Z = \mathcal{X} \times \mathcal{Y}$ to $[0,1]$. Given a fixed  uniformly ergodic Markov chain $S = (z_1, \cdots, z_n)$ of size $n$, where the elements $z_i$ are drawn from $Z$ with initial distribution $\nu$ and stationary distribution $\pi$, the following holds for all $f \in \mathcal{F}$ with probability at least $1 - \delta$ for any $\delta > 0$:
	\begin{align}\label{3}	\emph{Pr}\left(\mathop{\arg\max}\limits_i f(x)_i \neq y\right)\leq\widehat{\mathcal{R}}_\gamma(f)+2\mathfrak{R}_{S}(\mathcal{F}_\gamma)+3\sqrt{\frac{\tau_{\min}\ln(2/\delta)}{2n}}+C_n,
	\end{align}
	where
	$$C_n=\sqrt{\frac{2}{n(1-\beta)}+\frac{64}{n^2(1-\beta)^2}\left\|\frac{d\nu}{d\pi}-1\right\|_2}.$$
\end{theorem}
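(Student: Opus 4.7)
\textbf{Proof plan for Theorem~\ref{thm 1}.}

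The plan is to mimic the classical Rademacher-based generalization argument (as in \cite{mohri2018foundations}) while carefully replacing the two ingredients that rely on independence: (i) the concentration step (McDiarmid) and (ii) the symmetrization step, which in the i.i.d.\ case uses the identity $\mathbb{E}_\pi[g]=\mathbb{E}_S[\widehat{\mathbb{E}}_S[g]]$. Throughout, write $g = l_\gamma \circ \mathcal{M}\circ f$ so that $\mathcal{R}_\gamma(f)=\mathbb{E}_\pi[g]$ and $\widehat{\mathcal{R}}_\gamma(f)=\widehat{\mathbb{E}}_S[g]$. The first reduction is the standard one: since $l_\gamma(t)\geq \mathbf{1}\{t\leq 0\}$, the misclassification probability $\Pr(\arg\max_i f(x)_i\neq y)$ is at most $\mathcal{R}_\gamma(f)$, so it suffices to bound $\mathcal{R}_\gamma(f)-\widehat{\mathcal{R}}_\gamma(f)$ uniformly over $\mathcal{F}_\gamma$.

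Next, I would introduce the uniform deviation functional $\Phi(S):=\sup_{g\in\mathcal{F}_\gamma}\big(\mathbb{E}_\pi[g]-\widehat{\mathbb{E}}_S[g]\big)$, decomposing the bound on $\Phi(S)$ into a concentration part and a mean part:
\begin{equation*}
\Phi(S)\;\leq\;\mathbb{E}_S[\Phi(S)]\;+\;\big(\Phi(S)-\mathbb{E}_S[\Phi(S)]\big).
\end{equation*}
For the concentration part, I would invoke a McDiarmid-type inequality for uniformly ergodic Markov chains (e.g.\ the Paulin/Kontorovich-style bound), which replaces the i.i.d.\ version by inflating the bounded-differences variance with the mixing time $\tau_{\min}$. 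Because each $g\in\mathcal{F}_\gamma$ takes values in $[0,1]$, the bounded-difference constants are $c_i=1/n$, so this step yields with probability at least $1-\delta/2$ a deviation of order $\sqrt{\tau_{\min}\ln(2/\delta)/(2n)}$, accounting for the third term in \eqref{3}.

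The crux of the proof, and the main obstacle, is bounding $\mathbb{E}_S[\Phi(S)]$ without independence. I would split
\begin{equation*}
\mathbb{E}_\pi[g]-\widehat{\mathbb{E}}_S[g]\;=\;\underbrace{\big(\mathbb{E}_\pi[g]-\mathbb{E}_S[\widehat{\mathbb{E}}_S[g]]\big)}_{\text{bias from }\nu\neq\pi}\;+\;\underbrace{\big(\mathbb{E}_S[\widehat{\mathbb{E}}_S[g]]-\widehat{\mathbb{E}}_S[g]\big)}_{\text{centered at its mean}},
\end{equation*}
and handle each piece separately. For the second piece, the symmetrization step goes through essentially as usual (introduce a ghost chain, insert Rademacher signs, use subadditivity of $\sup$ and the fact that the Rademacher variables are independent of the data) and produces the $2\mathfrak{R}_S(\mathcal{F}_\gamma)$ term. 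For the bias piece, I would use the Markov operator formulation of \eqref{eqn_beta}: writing $\mathbb{E}[g(z_i)]=\int (P^{\,i-1}g)(z)\,\nu(dz)$ and expanding the deviation from stationarity in $L_2(\pi)$ via the Radon--Nikodym derivative $d\nu/d\pi-1$, one obtains $|\mathbb{E}_\pi[g]-\mathbb{E}[g(z_i)]|\leq \beta^{\,i-1}\|g-\mathcal{S}(g)\|_{L_2(\pi)}\|d\nu/d\pi-1\|_2$. Averaging over $i$ gives a geometric series producing the $1/(1-\beta)$ factor, and a second application of the spectral gap controls the associated variance contribution; combining these two contributions via a standard $\sqrt{a+b}\leq\sqrt{a}+\sqrt{b}$ bound and taking a supremum over $\mathcal{F}_\gamma$ yields exactly the term $C_n$ in the statement.

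I expect the delicate part to be quantifying the bias-from-initial-distribution carefully enough that both the $\frac{2}{n(1-\beta)}$ and the $\frac{64}{n^2(1-\beta)^2}\|d\nu/d\pi-1\|_2$ contributions emerge with the correct constants; this requires treating the variance of $\tfrac{1}{n}\sum_i g(z_i)$ under the stationary chain (giving the first term via the $L_2$-spectral gap) separately from the squared mean shift caused by $\nu\neq\pi$ (giving the second term via Cauchy--Schwarz against $d\nu/d\pi-1$). Once both pieces are bounded, adding a second $\delta/2$-confidence layer via McDiarmid to transfer $\mathfrak{R}_{\mathbb{E}S}$ to the empirical $\mathfrak{R}_S$ in the usual way, and combining with the first concentration step through a union bound, completes the derivation of \eqref{3}.
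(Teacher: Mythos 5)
Your proposal follows essentially the same route as the paper's proof: the same functional $\Phi(S)$, the same Markov-chain McDiarmid inequality with bounded differences $1/n$ applied twice (once for $\Phi$ and once to replace $\mathbb{E}[\mathfrak{R}_S]$ by $\mathfrak{R}_S$), the same split of $\mathbb{E}[\Phi(S)]$ into a bias term (handled via Jensen and the $L_2$-spectral-gap MSE bound, which the paper imports as Rudolf's Theorem 3.41 rather than re-deriving) and a centered term handled by ghost-sample symmetrization. The plan is correct and matches the paper's argument in all essential steps.
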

\begin{proof}
    See Appendix~\ref{Appendix B}.
\end{proof}

Compared with the generalization bound (\ref{eqn_gen_MLP}) derived for i.i.d. data, there are two additional terms in Theorem~\ref{thm 1}: $C_n$ and $\tau_{\min}$, both arising from the properties of the Markov datasets. Notably, the term $C_n$ appears in the upper bound to account for the difference between the initial distribution $\nu$ and the stationary distribution $\pi$ of the Markov chain.

Since the empirical Rademacher complexity $\mathfrak{R}_{S}(\mathcal{F}_{\gamma})$ involves taking the expectation over Rademacher variables and is independent of the distribution of the dataset, we can use the best-known upper bounds of empirical Rademacher complexity \cite{bartlett2017spectrally} to obtain the following result.

\begin{lemma}\label{lemma_MLP}
Let fixed nonlinearities $(\sigma_1,\ldots,\sigma_L)$ be given, where
	$\sigma_l$ is $c_l$-Lipschitz and $\sigma_l(0) = 0$. 
 Consider a uniformly ergodic Markov dataset $S = (z_1, \ldots, z_n)$ of size $n$, where each element $z_i$ is drawn from $Z$ with initial distribution $\nu$ and stationary distribution $\pi$. For $F_{\mathcal{W}}: \mathbb{R}^d \to \mathbb{R}^k$ with weight matrices $\mathcal{W} = (W_1, \ldots, W_L)$, let $d_{\max}= \max_l d_l$. Then, with probability at least $1 - \delta$, the following holds:

	\begin{equation*}
	\emph{Pr}\left(\mathop{\arg\max}\limits_i f(x)_i \neq y\right)\leq\widehat{R}_\gamma(f) + \tilde{ \mathcal{O} }\left( \frac{\|X\|_FR_{\mathcal W}} {\gamma n}\ln(2d_{\max}^2) + \sqrt{\frac{\tau_{\min}\ln(2/\delta)}{2n}}+C_n \right),
	\end{equation*}
	where $\| X \|_F = \sqrt{\sum_i \| x_i \|_2^2}$ and
		\begin{equation*}
	R_{\mathcal W} := \left( \prod_{i=1}^L c_i \|W_i\|_2 \right) \left( \sum_{i=1}^L \frac{\|W_i\|_{2,1}^{2/3}}{\|W_i \|_{2}^{2/3}} \right)^{3/2}.
	\end{equation*}
\end{lemma}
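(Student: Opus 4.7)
The plan is to derive the bound by composing Theorem~\ref{thm 1} with the spectrally normalized Rademacher complexity bound for multilayer networks from \cite{bartlett2017spectrally}. The crucial observation is that, once a realization of the sample $S$ is fixed, the empirical Rademacher complexity $\mathfrak{R}_S(\mathcal{F}_\gamma)$ depends only on the function class evaluated at the sample points and on the Rademacher variables $\{\epsilon_i\}$, not on the joint law of $S$. Consequently, any upper bound on $\mathfrak{R}_S(\mathcal{F}_\gamma)$ that holds pointwise in $S$ remains valid regardless of whether $S$ is i.i.d.\ or Markov, and the Markov dependency enters the final bound only through the terms already produced by Theorem~\ref{thm 1}.

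First, I would instantiate Theorem~\ref{thm 1} on the class $\mathcal{F}_\gamma = \{(x,y) \mapsto l_\gamma(\mathcal{M}(F_{\mathcal{W}}(x),y)) : F_{\mathcal{W}} \in \mathcal{F}_X\}$, which maps $Z$ into $[0,1]$ since $l_\gamma \in [0,1]$. This yields, with probability at least $1-\delta$,
\begin{equation*}
\emph{Pr}\!\left(\mathop{\arg\max}\limits_i f(x)_i \neq y\right) \leq \widehat{\mathcal{R}}_\gamma(f) + 2\mathfrak{R}_S(\mathcal{F}_\gamma) + 3\sqrt{\frac{\tau_{\min}\ln(2/\delta)}{2n}} + C_n.
\end{equation*}
Next, I would pass from $\mathfrak{R}_S(\mathcal{F}_\gamma)$ to $\mathfrak{R}_S(\mathcal{F}_X)$ using Talagrand's contraction principle: since $l_\gamma$ is $1/\gamma$-Lipschitz and the margin operator $\mathcal{M}$ is Lipschitz in its output coordinates, this costs only a factor $\mathcal{O}(1/\gamma)$. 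Then I would invoke the spectrally normalized covering number estimate from Theorem~3.3 of \cite{bartlett2017spectrally}, which controls $\ln \mathcal{N}(\mathcal{F}_X, \varepsilon, \|\cdot\|_2)$ in terms of $\|X\|_F R_{\mathcal{W}}$ and the widths $d_l$, and integrate via the Dudley entropy integral to obtain
\begin{equation*}
\mathfrak{R}_S(\mathcal{F}_X) \leq \tilde{\mathcal{O}}\!\left(\frac{\|X\|_F \, R_{\mathcal{W}}}{n}\,\ln(2 d_{\max}^2)\right).
\end{equation*}

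Finally, I would substitute this estimate back into the inequality coming from Theorem~\ref{thm 1}, absorbing the $1/\gamma$ factor into the $\tilde{\mathcal{O}}$ notation, which yields the claimed bound. I expect no serious obstacle in this proof: the novelty and difficulty sit in Theorem~\ref{thm 1}, whereas the present lemma is essentially an assembly step. The two care points are verifying that the hypotheses of the Bartlett--Foster--Telgarsky bound (Lipschitz activations with $\sigma_l(0)=0$ and spectral norm constraints $\|W_l\|_2 \leq s_l$) are inherited from the definition of $\mathcal{F}_X$ in (\ref{Eq_NN}), which is immediate, and tracking constants so that the $\ln(2 d_{\max}^2)$ factor and the $\|X\|_F R_{\mathcal{W}}/(\gamma n)$ scaling match the statement exactly after combining contraction and Dudley's integral.
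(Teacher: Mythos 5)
Your proposal is correct and matches the paper's own (implicit) argument: the paper likewise observes that the empirical Rademacher complexity is a pointwise functional of the realized sample, so the spectrally normalized bound of Bartlett--Foster--Telgarsky applies unchanged, and the lemma follows by plugging that estimate into Theorem~\ref{thm 1}. The only cosmetic difference is that the paper (following \cite{bartlett2017spectrally}) handles the $1/\gamma$ factor at the covering-number level via the $2/\gamma$-Lipschitzness of $l_\gamma \circ \mathcal{M}$ before applying Dudley's integral, rather than via Talagrand's contraction on the Rademacher complexity itself; both routes are standard and yield the same bound.
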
	
\textbf{Remark.} In \cite{truong2022generalization}, an earlier neuron framework is employed, where the generalization bound is based on the product of the $\ell_1$-norm of the weight vectors in each layer and the Lipschitz constants of the activation functions. This differs slightly from the framework used in the Lemma~\ref{lemma_MLP}, which relies on weight matrices.

\subsection{Generalization Bound for Equivariant Neural Networks}\label{sec_ENN}

In this section, we derive the Rademacher complexity for equivariant neural networks by extending the approach in \cite{bartlett2017spectrally}. We begin by determining the covering number for the function space associated with equivariant linear models $\mathcal{H'}_l$. Using induction on network layers, we then establish a covering number bound for 
$L$-layer equivariant neural networks. Finally, applying the standard Dudley entropy integral allows us to obtain a generalization bound for equivariant neural networks trained on data generated by Markov processes.

\subsubsection{Covering Number for Linear Models with Equivariant}\label{sec_cL}

Let's first recall a few definitions concerning $\epsilon$-covers  and covering number.    
\begin{definition}[$\epsilon$-cover]
$\mathcal C$ is an $\epsilon$-cover of $ \mathcal Q$ with respect to metric $\|\cdot\|$ if for all $v'\in \mathcal Q$, there exists $v\in \mathcal C$ such that $\|v-v'\|\leq \epsilon$.
\end{definition}
\begin{definition}[Covering number]
The covering number $\mathcal N(\mathcal{Q}, \|\cdot\|, \epsilon)$ is defined as the minimum size of an $\epsilon$-cover.
\end{definition}

Since the linear functions with equivariant  are building blocks for multi-layer equivariant neural networks, we first consider the covering number of the following function space, denoted by
\begin{equation}
	\mathcal{H'}_l=\{W_lX_{l-1},W_l\rho_{l-1}(g)=\rho_{l}(g)W_l, g\in G,\|W_l\|_2\leq s_l\},
\end{equation}
where the matrix $X_{l-1}$ is the input of the $l$-th layer equivariant neural network, $\rho_l:G\to \mathbb{R}^{d_l\times d_l}$ is the representation used in the $l$-th layer.

The Peter-Weyl theorem implies that the representation 
$\rho_l$  can be decomposed into a direct sum of irreducible representations (irreps) as
\begin{equation}
\rho_l=Q_l^{-1}\left[\bigoplus_{k\in I}\bigoplus_{i=1}^{m_{l,\psi_k}}\psi_k\right] Q_{l-1},
\end{equation}
where the index set $I$ serves as an index for the set $\widehat{G}$ consisting of all non-equivalent irreps of the group $G$,
$m_{l,\psi_k}$
  is the multiplicity of the irrep 
$\psi_k$ in the representation $\rho_l$ (i.e., the number of times 
$\psi_k$ appears), and $Q_l$ is a basis transformation matrix. Each $\psi_k$
is a $\text{dim}_{\psi_k}\times \text{dim}_{\psi_k}$ matrix and the direct sum forms a block diagonal matrix with the irreps  $\psi_k$
 on the diagonal. From the decomposition of $\rho_l$, we can conclude that
 \begin{equation}\label{eq_sum}
     \sum_{k\in I}m_{l,\psi_k}\text{dim}_{\psi_k}=d_l.
 \end{equation}
 
 With this decomposition, we can parameterize equivariant networks in terms of irreps. Defining $\widehat W_l=Q_l^{-1}W_lQ_{l-1}$, the equivariant condition $W_l\rho_{l-1}=\rho_lW_l$ can be rewritten as
\begin{equation}\label{4}
	\widehat W_l\left(\bigoplus_k\bigoplus_{i=1}^{m_{l-1,\psi_k}}\psi_k\right) =\left(\bigoplus_k\bigoplus_{i=1}^{m_{l,\psi_k}}\psi_k\right)\widehat W_l.
\end{equation}
Note that, if $X_l = W_lX_{l-1}$ and $\widehat X^l=Q_l^{-1}X_l$, then we have $\widehat X^l=\widehat W_l\widehat X^{l-1}$.
It results in the following lemma.

\begin{lemma}\label{lem_eq}
	The covering number of $\mathcal{H}'_l$ equals the covering number of 
	\begin{equation*}
		\widehat{\mathcal{H}}_l=\left\{\widehat{W}_l\widehat X_{l-1},\widehat W_l\left(\bigoplus_k\bigoplus_{i=1}^{m_{l-1,\psi_k}}\psi_k\right) =\left(\bigoplus_k\bigoplus_{i=1}^{m_{l,\psi_k}}\psi_k\right)\widehat W_l, \|\widehat W_l\|_2\leq s_l\right\}.
	\end{equation*}
\end{lemma}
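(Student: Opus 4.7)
The plan is to construct an explicit isometric bijection $\Phi: \mathcal{H}'_l \to \widehat{\mathcal{H}}_l$ induced by the change-of-basis matrices $Q_l$ and $Q_{l-1}$. Once this is in place, any minimum $\epsilon$-cover of one class transports to an $\epsilon$-cover of the other of the same cardinality, yielding equality of the covering numbers.

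First I would observe that since $G$ is compact, the Peter--Weyl theorem lets us take all irreducible representations to be unitary and, by a standard averaging argument over Haar measure, the intertwiners $Q_l$ realizing the isotypic decomposition in~(\ref{4}) can be chosen orthogonal. This is the only analytic input in the argument, and it supplies every norm identity used below. Next, I would define
\[
\Phi(W_l X_{l-1}) := Q_l^{-1}\bigl(W_l X_{l-1}\bigr) = \widehat{W}_l \widehat{X}_{l-1},
\]
where $\widehat{W}_l = Q_l^{-1} W_l Q_{l-1}$ and $\widehat{X}_{l-1} = Q_{l-1}^{-1} X_{l-1}$; the second equality follows by inserting $Q_{l-1} Q_{l-1}^{-1} = I$. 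I would then verify three things in sequence: (i) $\Phi$ lands in $\widehat{\mathcal{H}}_l$ --- the equivariance constraint $W_l \rho_{l-1}(g) = \rho_l(g) W_l$ conjugates into the block-diagonal commutation relation~(\ref{4}), while $\|\widehat{W}_l\|_2 = \|Q_l^{-1} W_l Q_{l-1}\|_2 = \|W_l\|_2 \leq s_l$ since orthogonal conjugation preserves the spectral norm; (ii) $\Phi$ is bijective, with inverse $A \mapsto Q_l A$; (iii) $\Phi$ is an isometry in any unitarily invariant matrix norm (in particular the Frobenius or operator norm), because $\|\Phi(A) - \Phi(B)\| = \|Q_l^{-1}(A-B)\| = \|A-B\|$.

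Finally, I would transport covers: if $\mathcal{C}$ is a minimum $\epsilon$-cover of $\mathcal{H}'_l$, then $\Phi(\mathcal{C})$ is an $\epsilon$-cover of $\widehat{\mathcal{H}}_l$ of the same cardinality, and applying the same reasoning to $\Phi^{-1}$ yields the reverse inequality, so the covering numbers agree exactly. The main obstacle I anticipate is not the linear-algebraic manipulations, which are routine, but rather ensuring that the specific $Q_l$ used in the paper's isotypic decomposition is (or may be taken to be) orthogonal, so that both the spectral-norm constraint and the isometry transfer cleanly between the two parameterizations; once that unitarization step is locked in, the rest of the argument is bookkeeping.
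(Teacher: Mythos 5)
Your proposal is correct and follows essentially the same route as the paper: conjugate by the change-of-basis matrices to get $\widehat{W}_l\widehat{X}_{l-1}=Q_l^{-1}W_lX_{l-1}$, use orthogonality of the $Q$'s to see this is a norm-preserving bijection, and transport covers. You are in fact more careful than the paper's one-line proof, correctly noting that the isometry of the map on outputs rests on $Q_l$ (not just $Q_{l-1}$) being orthogonal and that the spectral-norm constraint transfers for the same reason.
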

\begin{proof}
	Since $\widehat W_l=Q_l^{-1}W_lQ_{l-1}$ and $\widehat X_{l-1}=Q_{l-1}^{-1}X_{l-1}$, we have $\widehat W_l \widehat X_{l-1}=Q_l^{-1}W_lX_{l-1}$. Additionally,   $Q_{l-1}$ is an orthogonal matrix, this ensures a bijection between $\mathcal{H}'_l$ and $\widehat{\mathcal{H}}_l$, thus implying that they share the same covering number.
\end{proof}

The block diagonal structure of $(\bigoplus_k \bigoplus_{i=1}^{m_{l,\psi_k}} \psi_k)$ together with Schur's lemma for real-valued representations induces a similar structure on $\widehat W_l$. For instance, if the direct sum decomposition of the representation at the $l$-th layer utilizes $C$ irreducible representations, we have:

\begin{equation*}
\widehat{W}_l \left[
\overbrace{
\begin{array}{ccc}
\psi_1 & \cdots & 0 \\
\vdots &\ddots &\vdots  \\
0 &  \cdots& \psi_1 \\
\vdots & & \vdots \\
0 & \cdots & 0 \\
\vdots & & \vdots \\
0 & \cdots & 0 
\end{array}
}^{m_{l-1,\psi_1}}
\hspace{-3mm}
\begin{array}{c}
 \\
 \\
 \ddots\\
 \\
 \\
\\
\end{array}
\hspace{-3mm}
\overbrace{
\begin{array}{ccc}
0 & \cdots & 0 \\
\vdots &\ddots & \vdots \\
0 & \cdots & 0 \\
\vdots & & \vdots \\
\psi_C & \cdots & 0 \\
\vdots &\ddots & \vdots \\
0 & \cdots & \psi_C
\end{array}
}^{m_{l-1,\psi_C}}
\right] = \left[
\overbrace{
\begin{array}{ccc}
\psi_1 & \cdots & 0 \\
\vdots &\ddots & \vdots \\
0 & \cdots & \psi_1 \\
\vdots & & \vdots \\
0 & \cdots & 0 \\
\vdots & & \vdots \\
0 & \cdots & 0 
\end{array}
}^{m_{l,\psi_1}}
\hspace{-3mm}
\begin{array}{c}
 \\
 \\
 \ddots\\
 \\
\\
\\
\end{array}
\hspace{-3mm}
\overbrace{
\begin{array}{ccc}
0 & \cdots & 0 \\
\vdots & & \vdots \\
0 & \cdots & 0 \\
\vdots & & \vdots \\
\psi_C & \cdots & 0 \\
\vdots &\ddots & \vdots \\
0 & \cdots & \psi_C
\end{array}
}^{m_{l,\psi_C}}
\right] \widehat{W}_l
\end{equation*}

Similarly, we can divide $\widehat W_l$ into corresponding blocks, i.e.
\begin{equation*}
\widehat W_l=\begin{bmatrix}
    \widehat W_l(\psi_1, 1,\psi_1, 1)&\widehat W_l(\psi_1, 2,\psi_1, 1)&\cdots&\widehat W_l(\psi_C, m_{l-1,\psi_C},\psi_1, 1)\\
    \widehat W_l(\psi_1, 1,\psi_1, 2)&\widehat W_l(\psi_1, 2,\psi_1, 2)&\cdots&\widehat W_l(\psi_C, m_{l-1,\psi_C},\psi_1, 2)\\
    \vdots&&&\vdots\\
    \widehat W_l(\psi_1, 1,\psi_C,m_{l,\psi_C} )&\widehat W_l(\psi_1, 2,\psi_1, 1)&\cdots&\widehat W_l(\psi_C, m_{l-1,\psi_C},\psi_C, m_{l,\psi_C})\\
\end{bmatrix}
\end{equation*}
where $\widehat W_l(\psi_k,i, \psi_{k'}, j)$  represents the corresponding block connecting the $i$-th block $\psi_k$ in $(\bigoplus_{i=1}^{m_{l-1,\psi_k}} \psi_k)$ and  $j$-th block $\psi_{k'}$ in $(\bigoplus_{i=1}^{m_{l,\psi_{k'}}} \psi_k)$ for all $k, k' = 1, \ldots, C$, $j = 1, \ldots, m_{l, \psi_k}$, and $i = 1, \ldots, m_{l-1, \psi_{k'}}$.

Thus, we have:

\begin{equation*}
\widehat W_l(\psi_k,i, \psi_{k'}, j) \psi_k = \psi_{k'} \widehat W_l(\psi_k,i, \psi_{k'}, j).
\end{equation*}

According to Lemma~\ref{Schur_lemma}, if $\psi_k \neq \psi_{k'}$, which means that $\psi_k$ and $\psi_{k'}$ are non-isomorphic, then $\widehat W_l(\psi_k,i, \psi_{k'}, j) = 0$; if $\psi_k = \psi_{k'}$, $\widehat W_l(\psi_k, i, \psi_{k'}, j) := \widehat W_l(\psi_k, j, i)$, then we have

\begin{align}\label{eq_threetype}
\widehat{W}_l(\psi_k, j, i) =
\begin{cases}
    \lambda \textbf{I}_{\text{dim}_{\psi_k}} & \text{if } \psi_k \text{ is real type}, \\
    \left[
            \begin{array}{cc}
              a & -b \\
              b & a \\
            \end{array}
          \right]
     \otimes \textbf{I}_{\text{dim}_{\psi_k}/2} & \text{if } \psi_k \text{ is complex type}, \\
     \left[
            \begin{array}{cccc}
        a & -c & -b & -d \\
        c & a & d & -b \\
        b & -d & a & c \\
        d & b & -c & a \\
            \end{array}
          \right] \otimes \textbf{I}_{\text{dim}_{\psi_k}/4} & \text{if } \psi_k \text{ is quaternionic type}.
\end{cases}
\end{align}

 For convenience, we use $c_{\psi_k}=1,2,\text{or},4$ to denote the number of parameters in these three different forms, which can also be used to differentiate among the three types. It is clear that $\dim _{\psi_k}/c_\psi\geq 1$, so $c_{\psi_k}\leq \text{dim}_{\psi_k}$.

From the above discussion, we can conclude that $\widehat{W}_l$ is a block diagonal matrix as follows,
\begin{equation}\label{W_decomposition}
    \widehat W_l = \begin{bmatrix}
[\widehat W_l(\psi_1, j, i)] & 0 & \cdots & 0 \\
0 & [\widehat W_l(\psi_2, j, i)] & 0 & 0 \\
\vdots & \cdots & \cdots & \vdots \\
0 & \cdots & 0 & [\widehat W_l(\psi_C, j, i)]
\end{bmatrix}
\end{equation}
where each block 
\begin{equation*}   
[\widehat W_l(\psi_k, j, i)]=\begin{bmatrix}
    \widehat W_l(\psi_k, 1 ,1)&\widehat W_l(\psi_k, 1,2)&\cdots&\widehat W_l(\psi_k, 1,m_{l-1,\psi_k})\\
    \widehat W_l(\psi_k, 2,1)&\widehat W_l(\psi_k, 2, 2)&\cdots&\widehat W_l(\psi_k, 2,m_{l-1,\psi_k})\\
    \vdots&\vdots&\vdots&\vdots\\
    \widehat W_l(\psi_k, m_{l,\psi_k},1)&\widehat W_l(\psi_k, m_{l,\psi_k},2)&\cdots&\widehat W_l(\psi_k, m_{l,\psi_k}, m_{l-1,\psi_k})
\end{bmatrix}
\end{equation*} 
is an $m_{l, \psi_k} \times m_{l-1, \psi_k}$ block matrix. The term $\widehat W_l(\psi_k, j, i)$ corresponds to one of the three forms in (\ref{eq_threetype}). Next, we will demonstrate the covering number of the equivariant linear models by applying the Maurey sparsification lemma.

\begin{lemma}\label{linear_covering}
	For any $W_l\in\mathbb{R}^{d_l\times d_{l-1}}$, $X_{l-1}\in\mathbb{R}^{d_{l-1}\times n}$, we obtain that 
	\begin{equation*}
    \ln\mathcal{N}(\mathcal{H}'_l,\|\cdot\|_F,\epsilon)
    \leq \left\lceil
    \frac{\max_k c_{\psi_k} m_{l,\psi_k} 
    \| W_l \|_F^2 \| X_{l-1} \|_F^2}{\epsilon^2}
    \right\rceil
    \ln\left(2 D_l \right),
\end{equation*}
where $D_l=\sum_{k} c_{\psi_k} m_{l,\psi_k} m_{l-1,\psi_k} \dim_{\psi_k}$.
\end{lemma}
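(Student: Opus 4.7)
The plan is to exploit the block-diagonal Kronecker structure of $\widehat{W}_l$ established in (\ref{W_decomposition}) and apply Maurey's sparsification lemma blockwise, then combine the per-block covers through a weighted precision allocation.

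First, by Lemma~\ref{lem_eq}, it suffices to cover $\widehat{\mathcal{H}}_l$; since $Q_{l-1}$ and $Q_l$ are orthogonal, $\|\widehat{W}_l\|_F = \|W_l\|_F$ and $\|\widehat{X}^{l-1}\|_F = \|X_{l-1}\|_F$. The block-diagonal form of $\widehat{W}_l$ splits $\widehat{W}_l\widehat{X}^{l-1}$ into $C$ independent subproducts $[\widehat{W}_l(\psi_k, j, i)]\,\widehat{X}^{l-1}_{(k)}$, and the Frobenius norms decompose as $\sum_k \|[\widehat{W}_l(\psi_k,j,i)]\|_F^2 = \|W_l\|_F^2$ and $\sum_k \|\widehat{X}^{l-1}_{(k)}\|_F^2 = \|X_{l-1}\|_F^2$.

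Second, for each irrep $\psi_k$ I would use the parametric form (\ref{eq_threetype}) to rewrite $[\widehat{W}_l(\psi_k, j, i)]$ as a base free-parameter matrix of size $c_{\psi_k} m_{l,\psi_k}\times m_{l-1,\psi_k}$, obtained by extracting the scalar coefficients in the Schur representation and tensoring back with $\mathbf{I}_{\dim_{\psi_k}/c_{\psi_k}}$. After a corresponding reshape of $\widehat{X}^{l-1}_{(k)}$ whose Frobenius norm equals $\|\widehat{X}^{l-1}_{(k)}\|_F$, the $k$-th subproduct becomes a plain matrix product to which Maurey's sparsification lemma applies. This yields a log covering number for the $k$-th block of order $\lceil c_{\psi_k} m_{l,\psi_k}\|[\widehat{W}_l(\psi_k,j,i)]\|_F^2 \|\widehat{X}^{l-1}_{(k)}\|_F^2 / \epsilon_k^2 \rceil \ln(2 d_k)$, where $d_k = c_{\psi_k} m_{l,\psi_k} m_{l-1,\psi_k}\dim_{\psi_k}$ bounds the ambient dimension and the factor $c_{\psi_k} m_{l,\psi_k}$ records the Frobenius conversion from the block to its base matrix.

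Finally, a Cartesian product of the per-block covers provides a joint cover with Frobenius precision $\sqrt{\sum_k \epsilon_k^2}$. Choosing $\epsilon_k^2 = \epsilon^2 \|[\widehat{W}_l(\psi_k,j,i)]\|_F^2/\|W_l\|_F^2$ makes the total precision equal $\epsilon$ and collapses each summand into $c_{\psi_k} m_{l,\psi_k}\|W_l\|_F^2 \|\widehat{X}^{l-1}_{(k)}\|_F^2/\epsilon^2$; summing over $k$, using $\sum_k \|\widehat{X}^{l-1}_{(k)}\|_F^2 = \|X_{l-1}\|_F^2$, and upper-bounding each $c_{\psi_k} m_{l,\psi_k}$ by $\max_k c_{\psi_k} m_{l,\psi_k}$ and $d_k$ by $D_l$ yields the stated bound. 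The main obstacle I expect is the second step: carefully unravelling the Kronecker parameterisation in each of the three Schur types so that Maurey's convex-hull argument applies to a genuine matrix product, and verifying that the Frobenius conversion introduces exactly the claimed factor $c_{\psi_k} m_{l,\psi_k}$. The remaining budget allocation and union bound are routine.
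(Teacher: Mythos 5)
Your high-level reduction (pass to $\widehat{\mathcal H}_l$ via Lemma~\ref{lem_eq}, exploit the block-diagonal Schur structure, invoke Maurey) matches the paper's, but your execution diverges in a way that leaves two genuine gaps. The paper does \emph{not} apply Maurey blockwise: it pools all blocks into a single atom set $\{V_1,\dots,V_N\}$ of size $N=2D_l$ (rank-one matrices $\pm\bm e_{a,j,k}\bm e_{b,i,k}^TY$ ranging over all irreps, multiplicities, and the $c_{\psi_k}$ free-parameter positions), shows by a three-case computation on the real/complex/quaternionic forms plus Cauchy--Schwarz that the rescaled weight matrix $B=\alpha\odot\widehat W_l$ satisfies $\|B\|_1\le\bar a=\max_k\sqrt{c_{\psi_k}m_{l,\psi_k}}\,\|W_l\|_F\|X_{l-1}\|_F$, and then applies Maurey \emph{once} to $\bar a\cdot\mathrm{conv}(\{V_i\})$ with a single sparsity budget $K$. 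The first gap in your version is the precision allocation $\epsilon_k^2=\epsilon^2\|[\widehat W_l(\psi_k,j,i)]\|_F^2/\|W_l\|_F^2$: this depends on how the Frobenius mass of the \emph{particular} $W_l$ being covered is distributed among its blocks, which is not a parameter of the class $\mathcal H'_l$. A cover must be a single fixed set; a Cartesian product of per-block covers at element-dependent resolutions is not one. You would need either a fixed (hence suboptimal) allocation or a union over a grid of allocations (costing extra additive log terms); the paper's single global Maurey application sidesteps this entirely, because the sparsity budget $K$ is shared across blocks automatically for each $W_l$.

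The second gap is that the step you yourself flag as the "main obstacle" --- unravelling the Kronecker parameterisation of each Schur type into a base matrix and verifying that the Frobenius bookkeeping produces exactly the factor $c_{\psi_k}m_{l,\psi_k}$ --- is the mathematical heart of the lemma, not a routine verification. In the paper this is precisely where the three-case bound $\|\widehat W_l(\psi_k,j,i)\odot\alpha(\psi_k,j,i)\|_1\le\sqrt{c_{\psi_k}}\,\|\widehat W_l(\psi_k,j,i)\|_F\bigl(\sum_m\alpha_{mm}^2\bigr)^{1/2}$ is proved, and it is what converts the $\ell_1$ budget required by Maurey into the Frobenius quantities appearing in the statement. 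Finally, a minor point: summing per-block ceilings gives $\sum_k\lceil\cdot_k\rceil$ rather than $\lceil\sum_k\cdot_k\rceil$, so even after repairing the allocation your route yields an extra additive $C-1$ inside the bracket, weaker (though only slightly) than the stated bound.
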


\begin{proof}
    See Appendix~\ref{Appendix C}.
\end{proof}
\textbf{Remark 3.2.1.}\label{remark1} When linear models do not satisfy equivariance, the covering number of such models has been derived in \cite{bartlett2017spectrally} and satisfies the following bound:
\begin{equation}\label{11}
\ln\mathcal{N}\left(W_lX_{l-1},\|\cdot\|_F,\epsilon\right)\leq\left\lceil\frac{\|W_l\|_F^2\|X_{l-1}\|_F^2d_l}{\epsilon^2}\right\rceil\ln(2d_ld_{l-1}).
\end{equation}
When we compare this result with the bound derived in Lemma~\ref{linear_covering}, we observe that equivariance reduces the covering number, which manifests in two key parts:
\begin{itemize}
    \item The first part involves the term 
$\sum_{k}c_{\psi_k}m_{l,\psi_k}m_{l-1,\psi_k}\text{dim}_{\psi_k}$,  where we know from (\ref{eq_sum}) that
 $\sum_{k}m_{l,\psi_k}\text{dim}_{\psi_k}=d_l$. Furthermore, since $c_{\psi_k}\leq \text{dim}_{\psi_k}$, we have the inequality
\begin{equation*}
c_{\psi_k}m_{l-1,\psi_k}\leq m_{l-1,\psi_k}\text{dim}_{\psi_k}\leq d_{l-1}.
\end{equation*}
Therefore, we conclude that
\begin{equation*}
\sum_{k}c_{\psi_k}m_{l,\psi_k}m_{l-1,\psi_k}\text{dim}_{\psi_k}\leq d_{l-1}\sum_{k}m_{l-1,\psi_k}\text{dim}_{\psi_k}\leq d_l\times d_{l-1}.
\end{equation*}
\item The second part  pertains to the term
\begin{equation*}
\max_k c_{\psi_k}m_{l,\psi_k} \| W_l \|_F^2 \| X_{l-1} \|_F^2\leq\|W_l\|_F^2\|X_{l-1}\|_F^2d_l,
\end{equation*}
where the inequality holds dues to
\begin{equation*}
\max_k c_{\psi_k}m_{l,\psi_k}\leq \max_k \dim_{\psi_k}m_{l,\psi_k}\leq\sum_k\dim_{\psi_k}m_{l,\psi_k}=d_l.
\end{equation*}
\end{itemize}

Thus, the covering number of equivariant linear models is smaller than that of traditional linear models, which contributes to a better generalization bound by leveraging the inherent symmetries in the data.

\subsubsection{Generalization Bound of Equivariant  Multi-layer Neural Networks}\label{ssec_GB}
In the previous section, we derived the covering number for a single-layer neural network. Now, we proceed to establish a generalization bound for a multi-layer neural network. The first step is to derive a covering bound for the $L$-layers equivariant neural network, as presented in the following lemma.
\begin{lemma}\label{lemma7}
	Let fixed nonlinearities $(\sigma_1,\ldots,\sigma_L)$ be given, where each
	$\sigma_l$ is $c_l$-Lipschitz and satisfies $\sigma_l(0) = 0$.  Let data matrix $X\in\mathcal R^{d_{0}\times n}$ be given, where the $n$ is the number of samples.
	Let $\mathcal H_L$ denote the family of matrices obtained by evaluating $X$ with all choices of $L$-layer equivariant neural network $H_{\mathcal W}$:
    \begin{equation}
 \mathcal H_X=\big\{H_{\mathcal{W}}(X),W_l\rho_{l-1}=\rho_lW_l,\|W_l\|_2\leq s_l,l\in[L]\big\},
\end{equation}
where $H_{\mathcal{W}}$ is defined in (\ref{Eq_enn}).
	Then for any $\epsilon>0$, we have the bound:
	\begin{equation*}
		\ln\mathcal N(\mathcal H_X,\|\cdot\|_F,\epsilon)\leq \frac{\|X\|_F^2\ln(2D)}{\epsilon^2}
		\left(\prod_{l=1}^{L}c_l^2s_l^2\right)\left(\sum_{l=1}^{L}\left(\frac{b_l}{s_l}\right)^{2/3}\right)^3,
	\end{equation*}
	where $b_l=\max_k \sqrt{c_{\psi_k}m_{l,\psi_k}} \| W_l \|_F$ and 
   $D=\max_l\sum_{k=1}c_{\psi_k} m_{l,\psi_k}m_{l-1,\psi_k}\emph{dim}\psi_k$.
\end{lemma}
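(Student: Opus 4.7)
The plan is to follow the layer-by-layer covering strategy of \cite{bartlett2017spectrally}, substituting the equivariant linear cover from Lemma~\ref{linear_covering} for the classical bound in (\ref{11}), and then optimizing the per-layer error allocation via a Lagrange multiplier calculation.

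First I would fix per-layer tolerances $\delta_1,\ldots,\delta_L>0$ and, for each $l$, use Lemma~\ref{linear_covering} to produce a $\delta_l$-cover of $\mathcal H'_l$. Composing a cover element with the $c_l$-Lipschitz activation $\sigma_l$ and propagating through the next layer yields the recurrence
$$\Delta_l \;\leq\; c_l s_l \Delta_{l-1} + c_l \delta_l$$
for the Frobenius error $\Delta_l = \|Z_l - \widehat Z_l\|_F$ at the $l$-th layer output (with $\Delta_0=0$). Unrolling gives $\Delta_L \leq \sum_{l=1}^L \delta_l\, c_l \prod_{i>l} c_i s_i$, and I would constrain this to be at most $\epsilon$. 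Simultaneously, $\sigma_i(0)=0$ combined with $\|W_i\|_2\leq s_i$ yields the input bound $\|X_{l-1}\|_F \leq \|X\|_F \prod_{i<l} c_i s_i$, which is the Frobenius radius that enters Lemma~\ref{linear_covering} at layer $l$.

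Substituting this input bound into Lemma~\ref{linear_covering}, and using $b_l = \max_k \sqrt{c_{\psi_k} m_{l,\psi_k}}\,\|W_l\|_F$, I obtain the per-layer bound
$$\ln \mathcal N_l \;\leq\; \left\lceil \frac{b_l^2 \,\|X\|_F^2 \,\prod_{i<l} c_i^2 s_i^2}{\delta_l^2} \right\rceil \ln(2 D_l).$$
Writing $r_l = b_l \|X\|_F \prod_{i<l} c_i s_i$ and $p_l = c_l \prod_{i>l} c_i s_i$, the remaining task is to minimize $\sum_l r_l^2/\delta_l^2$ under the linear constraint $\sum_l p_l \delta_l \leq \epsilon$. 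A standard Lagrange-multiplier argument produces the minimizer $\delta_l \propto r_l^{2/3} p_l^{-1/3}$ and optimum value $\bigl(\sum_l (p_l r_l)^{2/3}\bigr)^3/\epsilon^2$. Since $p_l r_l = \|X\|_F (b_l/s_l) \prod_{i=1}^L c_i s_i$, factoring out collapses this to the stated bound $\frac{\|X\|_F^2 \ln(2D)}{\epsilon^2} \bigl(\prod_l c_l^2 s_l^2\bigr) \bigl(\sum_l (b_l/s_l)^{2/3}\bigr)^3$, after replacing each $D_l$ by the uniform upper bound $D$.

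The main obstacle is that the cover at layer $l$ cannot be built for the true input $X_{l-1}$ but must be constructed for each approximated input $\widehat X_{l-1}$ chosen from the previous cover. I would handle this by producing, for every representative $\widehat X_{l-1}$, a conditional $\delta_l$-cover of $\{W_l \widehat X_{l-1}\}$ via Lemma~\ref{linear_covering} using the uniform input-radius $\|X\|_F \prod_{i<l} c_i s_i$, absorbing the accumulated covering slack into this worst-case bound. A minor bookkeeping concern is that the ceiling in the per-layer bound can be removed at the cost of an additive $L\ln(2D)$ term, which is lower order and absorbed into the $1/\epsilon^2$ leading contribution by a mild rescaling of constants.
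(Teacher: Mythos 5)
Your proposal is correct and follows essentially the same route as the paper's proof: an inductive layer-by-layer cover with the error recurrence $\Delta_l \le c_l s_l \Delta_{l-1} + c_l\delta_l$, the input-radius bound $\|X_{l-1}\|_F \le \|X\|_F\prod_{i<l}c_i s_i$, and the per-layer equivariant covering bound of Lemma~\ref{linear_covering}. The only difference is presentational: you derive the per-layer resolutions $\delta_l \propto r_l^{2/3}p_l^{-1/3}$ via a Lagrange-multiplier optimization, whereas the paper simply posits the (equivalent, up to an $l$-independent normalization) choice $\epsilon_l = \alpha_l\epsilon/(c_l\prod_{j>l}c_js_j)$ and verifies it; your explicit handling of the ceiling term is also slightly more careful than the paper's.
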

\begin{proof}
     See Appendix~\ref{proof_of_thm8}.
\end{proof}

The covering number estimation derived from Lemma~\ref{lemma7} can be integrated with that of Lemma~\ref{lem_Dudley} to yield the final result.
\begin{theorem}{\label{theorem_gb}}
	Let fixed nonlinearities $(\sigma_1,\ldots,\sigma_L)$ be given, where each
	$\sigma_l$ is $c_l$-Lipschitz and satisfies $\sigma_l(0) = 0$.  Further, let margin $\gamma > 0$. Then with probability at least $1-\delta$ over uniformly ergodic Markov datasets draw of $n$ samples $S = (z_1, \ldots, z_n)$, where each $z_i$ is drawn from $Z$ with initial distribution $\mu$ and stationary distribution $\pi$, any equivariant neural network $H_{\mathcal W}:\mathbb R^d\to\mathbb R^k$
	with weight matrices $\mathcal W = (W_1, \ldots, W_L)$ obey $\|W_l\|_2\leq s_l$ satisfies the following bound:
		\begin{align*}
\emph{Pr}\left(\mathop{\arg\max}\limits_i f(x)_i \neq y\right) 
      \leq\widehat{R}_\gamma(f) + \tilde{ \mathcal{O} }\left( \frac{\|X\|_FR_{\mathcal W}} {\gamma n}\ln(2D) + \sqrt{\frac{\tau_{min}\ln(2/\delta)}{2n}}+C_n \right),
		\end{align*}	
		where 
		\begin{equation*}
		R_{\mathcal W} := \left( \prod_{j=1}^L c_j s_j\right)\left(\sum_{l=1}^{L}\left(\frac{\max_k \sqrt{c_{\psi_k}m_{l,\psi_k}} \| W_l \|_F}{s_l}\right)^{2/3}\right)^{3/2}
		\end{equation*}
  $$C_n=\sqrt{\frac{2}{n(1-\beta)}+\frac{64}{n^2(1-\beta)^2}\left\|\frac{d\nu}{d\pi}-1\right\|_2}.$$
$\beta$  and $\tau_{\min}$ are defined in (\ref{eqn_beta}) and (\ref{eqn_tau_min}), respectively.
\end{theorem}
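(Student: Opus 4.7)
The plan is to follow the same pipeline that yields Lemma~\ref{lemma_MLP} for ordinary MLPs, but with the ingredients swapped out for their equivariant counterparts. The starting point is Theorem~\ref{thm 1}, instantiated on the loss class
\begin{equation*}
\mathcal{F}_\gamma=\{(x,y)\mapsto l_\gamma(\mathcal{M}(H_{\mathcal{W}}(x),y)):H_{\mathcal{W}}\in\mathcal{H}_X\},
\end{equation*}
which gives immediately the margin-type inequality with an additive $2\mathfrak{R}_S(\mathcal{F}_\gamma)+3\sqrt{\tau_{\min}\ln(2/\delta)/(2n)}+C_n$ term. The work is therefore reduced to controlling $\mathfrak{R}_S(\mathcal{F}_\gamma)$ for the equivariant class, and both the Markov-specific residue $C_n$ and the mixing time $\tau_{\min}$ simply pass through unchanged.

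To bound $\mathfrak{R}_S(\mathcal{F}_\gamma)$, I would first absorb the margin operator and the clipped loss into a $2/\gamma$ Lipschitz factor via Talagrand's contraction principle, reducing the problem to bounding $\mathfrak{R}_S(\mathcal{H}_X)$ (up to a $1/\gamma$ factor and a coordinate-wise treatment of the $k$ output classes, which costs only constants). Then I would apply the Dudley entropy integral (Lemma~\ref{lem_Dudley}) to $\mathcal{H}_X$ with respect to the empirical $L^2$-metric on $\{x_1,\ldots,x_n\}$, which requires estimates of $\ln\mathcal{N}(\mathcal{H}_X,\|\cdot\|_F,\epsilon)$. Exactly this is what Lemma~\ref{lemma7} supplies: plugging in its bound
\begin{equation*}
\ln\mathcal{N}(\mathcal{H}_X,\|\cdot\|_F,\epsilon)\leq \frac{\|X\|_F^2\ln(2D)}{\epsilon^2}\Bigl(\prod_{l=1}^L c_l^2 s_l^2\Bigr)\Bigl(\sum_{l=1}^L(b_l/s_l)^{2/3}\Bigr)^3
\end{equation*}
yields an integrand of the form $\epsilon^{-1}$ inside the Dudley integral, producing the familiar $\tilde{\mathcal{O}}(\cdot/n)$ factor after the standard truncation $\alpha$-argument (choose $\alpha\sim 1/n$ so the linear-in-$\alpha$ piece and the tail $\sqrt{\ln\mathcal{N}}/\sqrt{n}$ piece balance up to log factors).

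Matching terms, the constant multiplying $\ln(2D)$ will be $\|X\|_F$ times the square root of the product-of-spectral-norms factor, which is precisely $R_{\mathcal{W}}$ as defined in the theorem (the $3/2$ exponent on the sum-of-$(b_l/s_l)^{2/3}$ term arises from the outer square root of the $3$-rd power appearing in Lemma~\ref{lemma7}). Dividing by $\gamma$ accounts for the Lipschitz contraction, and the $1/n$ comes from the Rademacher normalization combined with Dudley. Finally, collecting this bound on $2\mathfrak{R}_S(\mathcal{F}_\gamma)$ together with the residual Markov terms $3\sqrt{\tau_{\min}\ln(2/\delta)/(2n)}+C_n$ from Theorem~\ref{thm 1} delivers the claimed inequality.

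The only step requiring real care is the Dudley integration itself: the covering-number bound blows up as $\epsilon\to 0$, so one must introduce a lower cutoff and control the linear-in-$\alpha$ contribution, and one must also verify that the contraction and the $k$-way max over classes do not alter the dependence on $D$ and $R_{\mathcal{W}}$ beyond constants and logarithms hidden in $\tilde{\mathcal{O}}$. Everything else is bookkeeping: the equivariance enters only through $\ln(2D)$ replacing $\ln(2d_{\max}^2)$ and through $b_l=\max_k\sqrt{c_{\psi_k}m_{l,\psi_k}}\|W_l\|_F$ replacing $\sqrt{d_l}\|W_l\|_F$, which are exactly the reductions quantified in Remark~3.2.1.
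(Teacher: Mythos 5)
Your proposal is correct and follows essentially the same route as the paper's proof: instantiate Theorem~\ref{thm 1} on the margin-loss class, feed the covering-number bound of Lemma~\ref{lemma7} into the Dudley entropy integral of Lemma~\ref{lem_Dudley}, and handle the $2/\gamma$-Lipschitz composition of the ramp loss with the margin operator (the paper absorbs this factor directly into the covering number rather than via Talagrand contraction at the Rademacher level, a cosmetic difference). The only other divergence is the choice of truncation point ($\alpha\sim 1/n$ versus the paper's $\alpha=1/\sqrt{n}$), which does not affect the stated $\tilde{\mathcal{O}}$ bound.
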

\begin{proof}
See Appendix~\ref{proof_of_thm8}.
\end{proof}
\textbf{Remark 3.2.2}\label{remark}: Theorem~\ref{theorem_gb} provides a generalization error bound for equivariant neural networks trained on Markov data. This result extends the analysis of neural networks by considering the specific structure of equivariant models and the dependencies introduced by the Markov process, offering novel insights into their performance under these conditions.

When comparing the results of Theorem~\ref{theorem_gb} (equivariant) with Lemma~\ref{lemma_MLP} (non-equivariant), there is a significant improvement. The key distinction is that 
\begin{equation*}
 \sum_{k} c_{\psi_k} m_{l,\psi_k} m_{l-1,\psi_k} \dim_{\psi_k}\leq d^2_{\max},
 \end{equation*}
which leads to a tighter bound and enhanced generalization performance for equivariant neural networks. Furthermore, this generalization error bound can inform the design of more effective equivariant neural networks.

Specifically, given a group $G$, its irreducible representation space $\widehat{G}$ is typically known or must be determined beforehand. It is advantageous to select different irreducible representations. when different irreducible representations are chosen with $c_{\psi} = 1$, we obtain $m_{l,\psi_ k} = m_{l-1,\psi_ k} = 1$. 
In this scenario, \begin{equation*}
 \sum_{k} c_{\psi_k} m_{l,\psi_k} m_{l-1,\psi_k} \dim \psi_k= \sum_{k} m_{l,\psi_k} \dim \psi_k = d_l \ll d^2_{\max},
 \end{equation*} and  $$\max_k \sqrt{c_{\psi_k}m_{l,\psi_k}} \| W_l \|_F=\|W_l\|_F\leq\|W_l\|_{2,1},$$ resulting in optimal generalization performance.

\section{Experimental Results}
\label{sec:experiments}
To systematically investigate the factors affecting generalization in equivariant neural networks, we conducted experiments with the following objectives:

\textbf{1. Markov Property and Generalization:} To analyze how the Markov property in the data impacts generalization performance.

\textbf{2. Equivariance and Generalization:} To assess the generalization improvement introduced by equivariant architectures.

\textbf{3. Multiplicity in Irreducible Decomposition:} To explore the effect of multiplicities in irreducible representations on generalization.

Firstly, following \cite{behboodi2022pac}, we generated synthetic datasets based on a high-dimensional torus $\mathcal{T}^D$, ensuring that the data exhibits rotational symmetry. Additionally, based on the specific requirements of this paper, we incorporated the Markov property into the datasets. As a result, two distinct types of datasets were created:

\textbf{Markov Dataset:} The data is generated through a Markov process, where each data point’s angle is obtained by applying a small random perturbation to the angle of the previous time step, introducing temporal dependencies. The data points are embedded in a high-dimensional torus $\mathcal{T}^D$ (where $D$ is a positive integer determined based on experimental requirements) and are further perturbed with small amounts of noise. After perturbation, each point is normalized to ensure that it remains on the torus. This process enforces local correlations between consecutive points while preserving the dataset’s overall rotational symmetry. The class labels are randomly assigned as either 0 or 1, making classification independent of the data’s geometric properties.

\textbf{Independent Dataset:} Each data point is generated independently, with its angles sampled randomly without any dependency on previous points. Similar to the Markov dataset, the points lie on a high-dimensional torus $\mathcal{T}^D$ and undergo random perturbations, followed by normalization to maintain their structure. However, since each sample is drawn independently, there is no sequential dependency among the data points. The class labels are randomly assigned as either 0 or 1, ensuring that classification remains independent of the data’s underlying structure.
\begin{figure}[t]
    \centering
    \begin{minipage}[t]{0.46\textwidth}
        \centering
        \includegraphics[width=\textwidth]{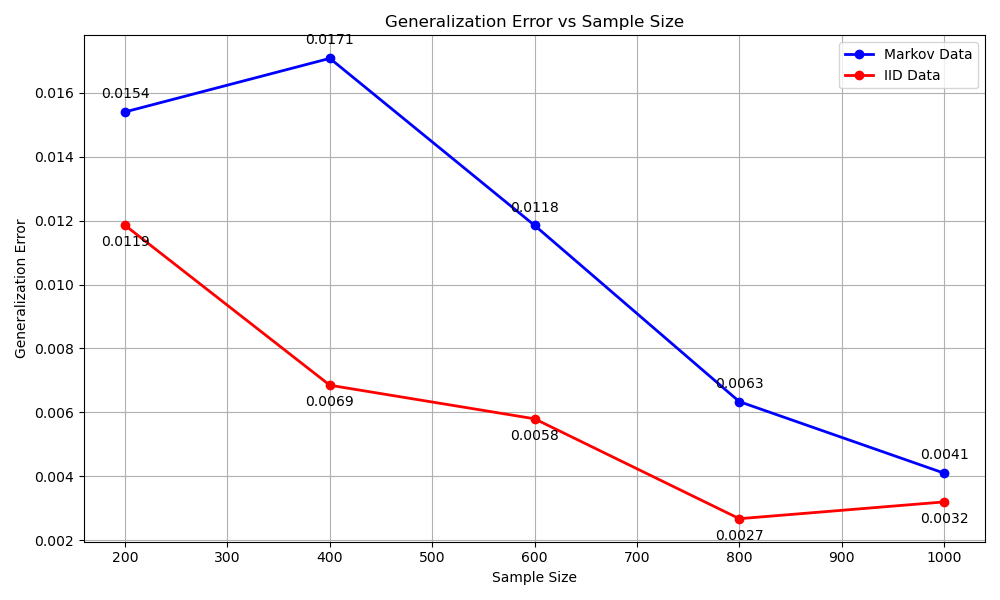} 
        \caption{Markov Property and Generalization}
        \label{fig:2}
    \end{minipage}
    \hfill 
    \begin{minipage}[t]{0.46\textwidth}
        \centering
        \includegraphics[width=1.08\textwidth]{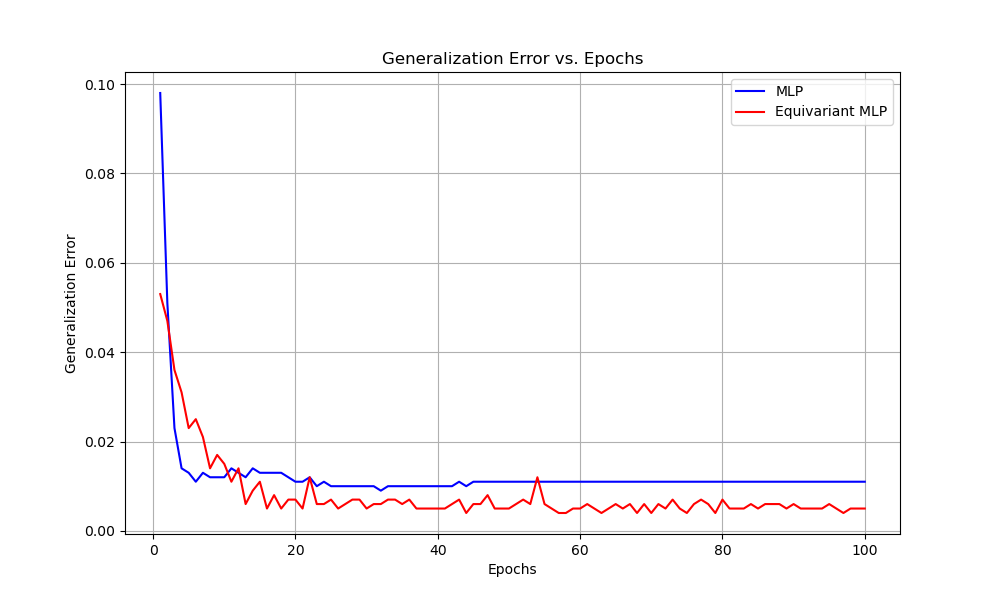}
        \caption{Equivariance and Generalization}
        \label{fig:3}
    \end{minipage}
\end{figure}
We begin by using a multi-layer perceptron (MLP) with the following architecture: a four-layer fully connected feedforward network. The network undergoes nonlinear transformations through ReLU activation functions. The network consists of an input layer, followed by a hidden layer with 32 neurons, another hidden layer with 16 neurons, and a final output layer. The output is passed through a Sigmoid activation function to produce a probability score. The model is trained using the Adam optimizer with $L_2$-regularization (weight decay). For the datasets, we use the two datasets mentioned earlier with $D=1$ to train the MLP. The results, shown in Fig.~\ref{fig:2}, demonstrate that the Markov property indeed reduces the network's generalization performance.

Next, we trained the $SO(2)$-equivariant steerable network (employs the equivariant layer from \cite{weiler2019general}) with the same depth and width as the MLP above on the Markov dataset with $D=1$. The results are shown in Fig.~\ref{fig:3}. It can be observed that for the Markov dataset, which inherently has certain symmetries, the equivariant network indeed leads to an increase in generalization error.
\begin{figure}[t]
    \centering
    \begin{minipage}[t]{0.46\textwidth}
        \centering
        \includegraphics[width=\textwidth]{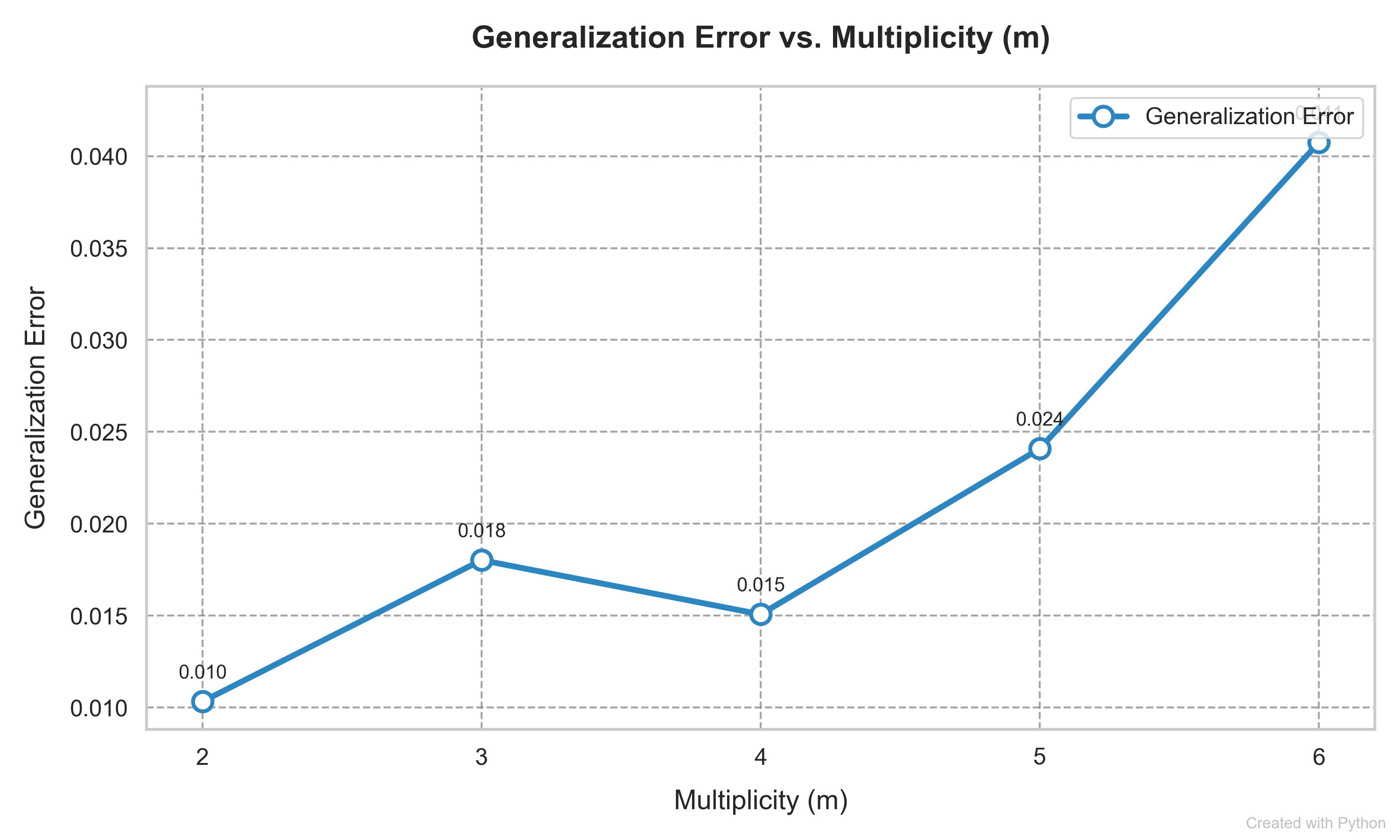} 
        \caption{The generalization error obtained from training equivariant networks on the Markov dataset on high-dimensional torus for different multiplicities $m$ .}
        \label{fig:4}
    \end{minipage}
    \hfill 
    \begin{minipage}[t]{0.46\textwidth}
        \centering
        \includegraphics[width=\textwidth]{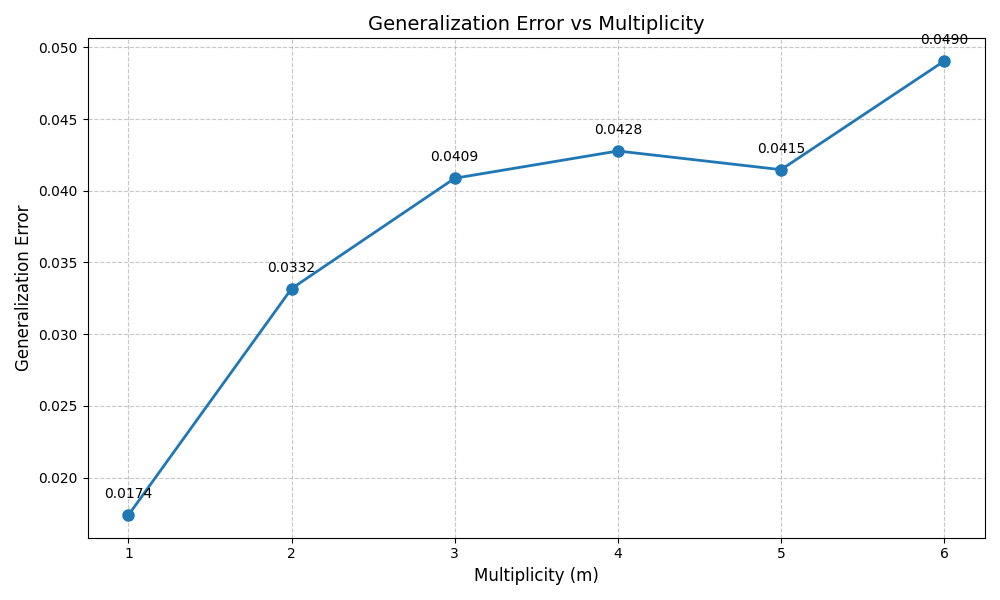}
        \caption{The generalization error obtained by training the Rotated MNIST dataset with equivariant networks at different multiplicities $m$ .}
        \label{fig:5}
    \end{minipage}
\end{figure}
  
Finally, we train both the generated Markov dataset on the high-dimensional torus ($D>1$) and the official rotated MNIST dataset, within the framework of the equivariant network. The results are shown in Fig.~\ref{fig:4} and Fig.~\ref{fig:5}. For each layer, we set the multiplicity $m_{l,\psi}$	of the irreducible representations to be equal to a constant $m$, and then vary $m$ to adjust the multiplicities. This allows us to observe the generalization error under different values of $m$. The results indicate that, when adjusting the multiplicities of the irreducible representations, the generalization error fluctuates to some extent. However, the overall trend suggests that smaller multiplicities generally perform better, whether for independent and identically distributed datasets or datasets with Markov properties. This is consistent with the theoretical results in Remark~\ref{remark}

\section{Conclusion}\label{con}
This paper has provided a unified framework for understanding the generalization behavior of neural networks trained on Markov datasets by adapting McDiarmid’s inequality to account for Markov dependencies and incorporating mixing time into the analysis. The resulting generalization bound highlights how the discrepancy between the initial and stationary distributions influences model performance. To address the structural constraints of equivariant networks, we leveraged the Peter-Weyl theorem and Schur’s lemma to characterize the block diagonal structure of weight matrices required for group symmetry. Using Maurey’s sparsification lemma, we quantified the reduced complexity of the hypothesis space, leading to tighter bounds on the empirical Rademacher complexity. These contributions culminate in a novel generalization bound that emphasizes the benefits of selecting non-isomorphic irreducible representations to enhance performance.

\bibliographystyle{siamplain}
\bibliography{references}

\begin{thebibliography}{10}

\bibitem{atz2021geometric}
{\sc K.~Atz, F.~Grisoni, and G.~Schneider}, {\em Geometric deep learning on molecular representations}, Nature Machine Intelligence, 3 (2021), pp.~1023--1032.

\bibitem{bartlett2017spectrally}
{\sc P.~L. Bartlett, D.~J. Foster, and M.~J. Telgarsky}, {\em Spectrally-normalized margin bounds for neural networks}, Advances in Neural Information Processing Systems, 30 (2017).

\bibitem{behboodi2022pac}
{\sc A.~Behboodi, G.~Cesa, and T.~S. Cohen}, {\em A pac-bayesian generalization bound for equivariant networks}, Advances in Neural Information Processing Systems, 35 (2022), pp.~5654--5668.

\bibitem{bietti2019group}
{\sc A.~Bietti and J.~Mairal}, {\em Group invariance, stability to deformations, and complexity of deep convolutional representations}, Journal of Machine Learning Research, 20 (2019), pp.~1--49.

\bibitem{bogatskiy2020lorentz}
{\sc A.~Bogatskiy, B.~Anderson, J.~Offermann, M.~Roussi, D.~Miller, and R.~Kondor}, {\em Lorentz group equivariant neural network for particle physics}, in International Conference on Machine Learning, PMLR, 2020, pp.~992--1002.

\bibitem{brocker2013representations}
{\sc T.~Br{\"o}cker and T.~Tom~Dieck}, {\em Representations of compact Lie groups}, vol.~98, Springer Science \& Business Media, 2013.

\bibitem{cesa2022program}
{\sc G.~Cesa, L.~Lang, and M.~Weiler}, {\em A program to build e(n)-equivariant steerable cnns}, in International Conference on Learning Representations, 2022.

\bibitem{chen2024deep}
{\sc Q.~Chen, Z.~Li, and L.~M. Lui}, {\em A deep learning framework for diffeomorphic mapping problems via quasi-conformal geometry applied to imaging}, SIAM Journal on Imaging Sciences, 17 (2024), pp.~501--539.

\bibitem{cohen2018spherical}
{\sc T.~S. Cohen, M.~Geiger, J.~K{\"o}hler, and M.~Welling}, {\em Spherical cnns}, in International Conference on Learning Representations, 2018.

\bibitem{cohen2022steerable}
{\sc T.~S. Cohen and M.~Welling}, {\em Steerable cnns}, in International Conference on Learning Representations, 2022.

\bibitem{elesedy2021provably1}
{\sc B.~Elesedy}, {\em Provably strict generalisation benefit for invariance in kernel methods}, Advances In Neural Information Processing Systems, 34 (2021), pp.~17273--17283.

\bibitem{elesedy2022group}
{\sc B.~Elesedy}, {\em Group symmetry in pac learning}, in ICLR 2022 Workshop on Geometrical and Topological Representation Learning, 2022.

\bibitem{elesedy2021provably}
{\sc B.~Elesedy and S.~Zaidi}, {\em Provably strict generalisation benefit for equivariant models}, in International Conference on Machine Learning, PMLR, 2021, pp.~2959--2969.

\bibitem{geiger2022e3nn}
{\sc M.~Geiger and T.~Smidt}, {\em e3nn: Euclidean neural networks}, arXiv preprint arXiv:2207.09453,  (2022).

\bibitem{goumiri2023new}
{\sc S.~Goumiri, D.~Benboudjema, and W.~Pieczynski}, {\em A new hybrid model of convolutional neural networks and hidden markov chains for image classification}, Neural Computing and Applications, 35 (2023), pp.~17987--18002.

\bibitem{he2021efficient}
{\sc L.~He, Y.~Chen, Y.~Dong, Y.~Wang, Z.~Lin, et~al.}, {\em Efficient equivariant network}, Advances in Neural Information Processing Systems, 34 (2021), pp.~5290--5302.

\bibitem{he2022neural}
{\sc L.~He, Y.~Chen, Z.~Shen, Y.~Yang, and Z.~Lin}, {\em Neural epdos: Spatially adaptive equivariant partial differential operator based networks}, in The Eleventh International Conference on Learning Representations, 2022.

\bibitem{he2021gauge}
{\sc L.~He, Y.~Dong, Y.~Wang, D.~Tao, and Z.~Lin}, {\em Gauge equivariant transformer}, Advances in Neural Information Processing Systems, 34 (2021), pp.~27331--27343.

\bibitem{he2021deep}
{\sc W.~He, A.~M. Sainju, Z.~Jiang, and D.~Yan}, {\em Deep neural network for 3d surface segmentation based on contour tree hierarchy}, in Proceedings of the 2021 SIAM International Conference on Data Mining (SDM), SIAM, 2021, pp.~253--261.

\bibitem{huang2022equivariant}
{\sc H.~Huang, D.~Wang, R.~Walters, and R.~Platt}, {\em Equivariant transporter network}, in Robotics: Science and Systems, 2022.

\bibitem{jia2023seil}
{\sc M.~Jia, D.~Wang, G.~Su, D.~Klee, X.~Zhu, R.~Walters, and R.~Platt}, {\em Seil: simulation-augmented equivariant imitation learning}, in 2023 IEEE International Conference on Robotics and Automation (ICRA), IEEE, 2023, pp.~1845--1851.

\bibitem{koltchinskii2002empirical}
{\sc V.~Koltchinskii and D.~Panchenko}, {\em Empirical margin distributions and bounding the generalization error of combined classifiers}, The Annals of Statistics, 30 (2002), pp.~1--50.

\bibitem{lecun2015deep}
{\sc Y.~LeCun, Y.~Bengio, and G.~Hinton}, {\em Deep learning}, Nature, 521 (2015), pp.~436--444.

\bibitem{li2024affine}
{\sc Y.~Li, Y.~Qiu, Y.~Chen, L.~He, and Z.~Lin}, {\em Affine equivariant networks based on differential invariants}, in Proceedings of the IEEE/CVF Conference on Computer Vision and Pattern Recognition, 2024, pp.~5546--5556.

\bibitem{lyle2020benefits}
{\sc C.~Lyle, M.~Van Der~Wilk, M.~Kwiatkowska, Y.~Gal, and B.~Bloem-Reddy}, {\em On the benefits of invariance in neural networks}, arXiv preprint arXiv:2005.00178,  (2020).

\bibitem{mohri2018foundations}
{\sc M.~Mohri, A.~Rostamizadeh, and A.~Talwalkar}, {\em Foundations of machine learning}, MIT press, 2018.

\bibitem{neyshabur2017pac}
{\sc B.~Neyshabur, S.~Bhojanapalli, and N.~Srebro}, {\em A pac-bayesian approach to spectrally-normalized margin bounds for neural networks}, International Conference on Learning Representations,  (2018).

\bibitem{nguyen2023equivariant}
{\sc H.~H. Nguyen, A.~Baisero, D.~Klee, D.~Wang, R.~Platt, and C.~Amato}, {\em Equivariant reinforcement learning under partial observability}, in Conference on Robot Learning, PMLR, 2023, pp.~3309--3320.

\bibitem{pan2023tax}
{\sc C.~Pan, B.~Okorn, H.~Zhang, B.~Eisner, and D.~Held}, {\em Tax-pose: Task-specific cross-pose estimation for robot manipulation}, in Conference on Robot Learning, PMLR, 2023, pp.~1783--1792.

\bibitem{paulin2015concentration}
{\sc D.~Paulin}, {\em Concentration inequalities for markov chains by marton couplings and spectral methods}, Electron. J. Probab, 20 (2015), pp.~1--32.

\bibitem{pisier1981remarques}
{\sc G.~Pisier}, {\em Remarques sur un r{\'e}sultat non publi{\'e} de b. maurey}, S{\'e}minaire d'Analyse fonctionnelle (dit" Maurey-Schwartz"),  (1981), pp.~1--12.

\bibitem{rudolf2011explicit}
{\sc D.~Rudolf}, {\em Explicit error bounds for markov chain monte carlo}, Dissertationes Mathematicae, 485 (2012), pp.~1--93.

\bibitem{scarabosio2022deep}
{\sc L.~Scarabosio}, {\em Deep neural network surrogates for nonsmooth quantities of interest in shape uncertainty quantification}, SIAM/ASA Journal on Uncertainty Quantification, 10 (2022), pp.~975--1011.

\bibitem{shen2024efficient}
{\sc Z.~Shen, Y.~Qiu, J.~Liu, L.~He, and Z.~Lin}, {\em Efficient learning of scale-adaptive nearly affine invariant networks}, Neural Networks, 174 (2024), p.~106229.

\bibitem{simeonov2022neural}
{\sc A.~Simeonov, Y.~Du, A.~Tagliasacchi, J.~B. Tenenbaum, A.~Rodriguez, P.~Agrawal, and V.~Sitzmann}, {\em Neural descriptor fields: {SE(3)}-equivariant object representations for manipulation}, in 2022 International Conference on Robotics and Automation (ICRA), IEEE, 2022, pp.~6394--6400.

\bibitem{smets2023pde}
{\sc B.~M. Smets, J.~Portegies, E.~J. Bekkers, and R.~Duits}, {\em Pde-based group equivariant convolutional neural networks}, Journal of Mathematical Imaging and Vision, 65 (2023), pp.~209--239.

\bibitem{sokolic2017generalization}
{\sc J.~Sokolic, R.~Giryes, G.~Sapiro, and M.~Rodrigues}, {\em Generalization error of invariant classifiers}, in Artificial Intelligence and Statistics, PMLR, 2017, pp.~1094--1103.

\bibitem{truong2022generalization}
{\sc L.~V. Truong}, {\em Generalization error bounds on deep learning with markov datasets}, Advances in Neural Information Processing Systems, 35 (2022), pp.~23452--23462.

\bibitem{van2020mdp}
{\sc E.~Van~der Pol, D.~Worrall, H.~van Hoof, F.~Oliehoek, and M.~Welling}, {\em Mdp homomorphic networks: Group symmetries in reinforcement learning}, Advances in Neural Information Processing Systems, 33 (2020), pp.~4199--4210.

\bibitem{vapnik1998statistical}
{\sc V.~Vapnik}, {\em Statistical learning theory}, John Wiley \& Sons Google Schola, 2 (1998), pp.~831--842.

\bibitem{vlavcic2021affine}
{\sc V.~Vla{\v{c}}i{\'c} and H.~B{\"o}lcskei}, {\em Affine symmetries and neural network identifiability}, Advances in Mathematics, 376 (2021), p.~107485.

\bibitem{wang2022so}
{\sc D.~Wang, R.~Walters, and R.~Platt}, {\em {SO(2)}-equivariant reinforcement learning}, in International Conference on Learning Representations, 2022.

\bibitem{wang2020incorporating}
{\sc R.~Wang, R.~Walters, and R.~Yu}, {\em Incorporating symmetry into deep dynamics models for improved generalization}, International Conference on Learning Representations,  (2021).

\bibitem{weiler2019general}
{\sc M.~Weiler and G.~Cesa}, {\em General {E(2)}-equivariant steerable cnns}, Advances in Neural Information Processing Systems, 32 (2019).

\bibitem{xiao2022stability}
{\sc J.~Xiao, Y.~Fan, R.~Sun, J.~Wang, and Z.-Q. Luo}, {\em Stability analysis and generalization bounds of adversarial training}, Advances in Neural Information Processing Systems, 35 (2022), pp.~15446--15459.

\bibitem{xiao2023pac}
{\sc J.~Xiao, R.~Sun, and Z.-Q. Luo}, {\em Pac-bayesian spectrally-normalized bounds for adversarially robust generalization}, Advances in Neural Information Processing Systems, 36 (2023), pp.~36305--36323.

\bibitem{zhu2021understanding}
{\sc S.~Zhu, B.~An, and F.~Huang}, {\em Understanding the generalization benefit of model invariance from a data perspective}, Advances in Neural Information Processing Systems, 34 (2021), pp.~4328--4341.

\bibitem{zhu2022sample}
{\sc X.~Zhu, D.~Wang, O.~Biza, G.~Su, R.~Walters, and R.~Platt}, {\em Sample efficient grasp learning using equivariant models}, in Robotics: Science and Systems, 2022.

\end{thebibliography}

\newpage

\appendix
\section{Real Valued Representation}\label{R_representation}
In this section, we discuss the representation theory of compact groups on the real field $\mathbb{R}$.

\begin{dfn}(Representation).
	A representation of the  group G on the vector space $V$ is a  continuous homomorphism $\rho:G\to GL(V)$ which  associates to each element of $g\in G$ an element of general linear group $GL(V)$, such that the condition below is satisfied:
	$$\forall g,h\in G,\quad \rho(gh)=\rho(g)\rho(h).$$
\end{dfn}
In this context, $V$ mainly refers to Euclidean space $\mathbb{R}^n$.

Given two representations $\rho_1 : G \to GL(\mathbb{R}^{n_1})$ and $\rho_2 : G \to GL(\mathbb{R}^{n_2})$, their direct sum
$\rho_1\oplus\rho_2 : G \to GL(\mathbb{R}^{n_1+n_2})$ is a representation obtained by stacking the two representations as
follow:
\begin{equation*}
	\rho_1\oplus\rho_2(g)=\begin{bmatrix}
		\rho_1(g) & 0 \\
		0 & \rho_2(g) 
	\end{bmatrix}.
\end{equation*}
\begin{dfn}(Irreducible Representation).
For a representation $\rho: G \to GL(V )$, if there exists a basis transformation matrix $Q$, such that $\rho=Q^{-1}[\rho_1\oplus\rho_2]Q$, then $\rho$ is a reducible representation; otherwise, it is an irreducible representation(irreps).
\end{dfn}
\begin{dfn}(Equivalent Representation)
  For representation $\rho_1,\rho_2$, if there exists a change of basis matrix $Q$ such that $\rho_1=Q^{-1}\rho_2 Q$, then $\rho_1$ and $\rho_2$ are called equivalent representations; otherwise, they are non-equivalent representations.  
\end{dfn}
For a compact group $G$, we usually denote the set of all its non-equivalent irreps as $\widehat{G}$, indexed by a set of indices $I$.

\begin{lem}(Schur lemma for real-valued representation)\label{Schur_lemma}
	Let $\rho_1:G\to GL(\mathbb{R}^{\dim_{\rho_1}})$ and $\rho_2:G\to GL(\mathbb{R}^{\dim_{\rho_2}})$ 
	be two real valued irreps of $G$ respectively
	on vector spaces $V_1$ and $V_2$. Suppose that the linear transformation $W : V_1 \to V_2$ is equivariant, i.e. $W\rho_1(g)=\rho_2(g)W$. 
	Then:
	
	1. If $\rho_1$ and $\rho_2$ are non-isomorphic, then $W = 0$.
	
	2. If $V_1 = V_2$ and $\rho_1=\rho_2=\rho$, $\dim_{\rho_1}=\dim_{\rho_2}=d$, then 
	
	• $\rho$ is real type: $W=W_{\mathbb{R}}=\lambda \textbf{I}_{d}$ ($\textbf{I}_d$ is a identity matrix of dimension d).
	
	• $\rho$ is complex type: \begin{equation*}
		W=W_{\mathbb{C}}=    \left[
            \begin{array}{cc}
              a & -b \\
              b & a \\
            \end{array}
          \right]
     \otimes \textbf{I}_{d/2}.
 	\end{equation*}
	
	• $\rho$ is quaternionic type: \begin{equation*}
		W=W_{\mathbb{H}}=     \left[
            \begin{array}{cccc}
        a & -c & -b & -d \\
        c & a & d & -b \\
        b & -d & a & c \\
        d & b & -c & a \\
            \end{array}
          \right]\otimes \textbf{I}_{d/4}.
	\end{equation*}
\end{lem}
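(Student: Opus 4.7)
The plan is to split the argument according to the two cases in the statement. For Part~1, I would imitate the classical Schur-lemma kernel/image argument: since $W\rho_1(g)=\rho_2(g)W$, the subspace $\ker(W)\subseteq V_1$ is $\rho_1$-invariant and $\mathrm{Im}(W)\subseteq V_2$ is $\rho_2$-invariant. Irreducibility then forces each to be $\{0\}$ or the whole ambient space. If $W\neq 0$, necessarily $\ker(W)=\{0\}$ and $\mathrm{Im}(W)=V_2$, so $W$ is an $\mathbb{R}$-linear isomorphism, and the intertwining relation $\rho_2(g)=W\rho_1(g)W^{-1}$ witnesses $\rho_1\simeq\rho_2$, contradicting the non-isomorphism hypothesis. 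Hence $W=0$.

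For Part~2, set $D:=\mathrm{End}_G(V)=\{W\in\mathrm{End}(V):W\rho(g)=\rho(g)W\ \text{for all}\ g\in G\}$. This is a finite-dimensional associative $\mathbb{R}$-algebra, and the same kernel/image reasoning shows every nonzero element is invertible, so $D$ is a finite-dimensional associative division algebra over $\mathbb{R}$. By Frobenius's theorem, $D$ is isomorphic to exactly one of $\mathbb{R}$, $\mathbb{C}$, or $\mathbb{H}$; these three possibilities are precisely what the statement calls the real, complex, and quaternionic type.

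To obtain the explicit block-matrix forms I would choose a real basis of $V$ adapted to its $D$-module structure. In the real case $D\cong\mathbb{R}$, every equivariant endomorphism is a real scalar, giving $W=\lambda\textbf{I}_d$. In the complex case $D\cong\mathbb{C}$, the space $V$ becomes a complex vector space of complex dimension $d/2$; writing $J\in D$ for the image of $i\in\mathbb{C}$ (so $J^2=-\textbf{I}_d$) and picking a complex basis $e_1,\dots,e_{d/2}$, the ordered real basis $(e_1,\dots,e_{d/2},Je_1,\dots,Je_{d/2})$ turns left multiplication by $a+bi\in\mathbb{C}$ into $\bigl(\begin{smallmatrix}a&-b\\b&a\end{smallmatrix}\bigr)\otimes\textbf{I}_{d/2}$, as claimed. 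The quaternionic case is analogous: fix images $I,J,K\in D$ of the quaternion units $i,j,k$ and a quaternionic basis $e_1,\dots,e_{d/4}$, then order the real basis blockwise as $(e_r,Ie_r,Je_r,Ke_r)_{r=1}^{d/4}$ and compute the matrix of left multiplication by $a+bI+cJ+dK$.

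The main technical obstacle is verifying that quaternion left multiplication in this basis produces exactly the $4\times 4$ matrix written in the lemma rather than a conjugate or transpose. I would expand $(a+bI+cJ+dK)\cdot Xe_r$ for $X\in\{1,I,J,K\}$ using the Hamilton relations $I^2=J^2=K^2=IJK=-1$, read off the coefficients in the basis $(e_r,Ie_r,Je_r,Ke_r)$, and arrange them as columns of the matrix; the $2\times 2$ complex check is an easier one-line calculation. Once these matrix representations are in hand, the lemma follows immediately from the Frobenius classification of $D$.
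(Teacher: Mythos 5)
Your proposal is correct. The paper does not actually prove this lemma itself --- it only states it and defers to the appendix of \cite{cesa2022program} and to \cite{brocker2013representations} --- and your argument (kernel/image invariance for the non-isomorphic case, the observation that $\mathrm{End}_G(V)$ is a finite-dimensional real division algebra, Frobenius's classification into $\mathbb{R}$, $\mathbb{C}$, $\mathbb{H}$, and an explicit $D$-adapted basis to read off the block forms) is precisely the standard route taken in those references. The only point worth making explicit is that the three matrix forms hold in a suitably chosen basis, which is consistent with how the paper uses the lemma after conjugating by the change-of-basis matrices $Q_l$.
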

The columns of $ W $ are all orthogonal to each other and each contains only one copy of each of its $ c_\rho $ free parameters, where
\begin{equation*}
c_\rho =
\begin{cases}
1 & \text{real type} \\
2 & \text{complex type} \\
4 & \text{quaternionic type}.
\end{cases}
\end{equation*} 
For more details refer to (\cite{cesa2022program}, Appendix) and \cite{brocker2013representations}.

\section{Proof of  Theorem~\ref{thm 1}}\label{Appendix B}
Let $\mathcal{L}(Z)$ denote the set of real-valued signed measures on $ (Z, \mathcal{B}(Z)) $, where $\mathcal{B}(Z)$ is the set of all Borel subsets in $ Z $. Assume that $\nu \ll \pi$. Define
\begin{equation*}
\mathcal{L}_2 := \left\{ \nu \in \mathcal{L}(Z) : \left\| \frac{d\nu}{d\pi} \right\|_2 < \infty \right\},
\end{equation*}
where $\| \cdot \|_2$ is the standard $L_2$ norm in the Hilbert space of complex-valued measurable functions on $Z$.
\begin{lem}\label{lem2}\cite[Theorem 3.41]{rudolf2011explicit}
	Let $\{z_n\}_{n\in \mathbb{N}}$ be a stationary Markov chain 
 with an initial distribution $\nu\in \mathcal L_2$ and 
 a stationary distribution $ \pi $.  Let $h\in\mathcal{F}_\gamma$ and define
	\begin{equation*}
		S_{n,n_0}(h)=\frac{1}{n}\sum_{j=1}^{n}h(x_{j+n_0}) 
	\end{equation*}
	for all $n_0>0$. Assume that $h$ is uniformly bounded by $M$ and the Markov operator has an $L_2$-spectral gap, i.e. $1-\beta > 0$. Then, it holds that
	\begin{equation}\label{10}
		\mathbb{E}\left[\left(S_{n,n_0}(h)-\mathcal S[h(x)]\right)^2\right]\leq\frac{2M}{n(1-\beta)}+\frac{64M^2}{n^2(1-\beta)^2}\beta^{n_0}\left\|\frac{d\nu}{d\pi}-1\right\|_2,
	\end{equation}
\end{lem}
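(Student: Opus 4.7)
The plan is to combine three ingredients already established in the paper: the Markov generalization bound of Theorem~\ref{thm 1}, the covering-number estimate for $L$-layer equivariant networks in Lemma~\ref{lemma7}, and the standard Dudley entropy integral (Lemma~\ref{lem_Dudley}). The overall route mirrors the classical Bartlett--Foster--Telgarsky argument, but with the i.i.d.\ concentration step replaced by the Markov adaptation producing the extra additive terms $\sqrt{\tau_{\min}\ln(2/\delta)/(2n)}$ and $C_n$, and with the non-equivariant covering number replaced by the sharper equivariant one.

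First, I would apply Theorem~\ref{thm 1} to the loss class $\mathcal{F}_\gamma=\{(x,y)\mapsto l_\gamma(\mathcal{M}(H_{\mathcal W}(x),y)):H_{\mathcal W}\in\mathcal{H}_X\}$. Since $l_\gamma\circ\mathcal{M}$ is bounded in $[0,1]$, the theorem yields
\[
\Pr\bigl(\arg\max_i f(x)_i\neq y\bigr)\;\leq\;\widehat{\mathcal R}_\gamma(f)+2\mathfrak{R}_S(\mathcal{F}_\gamma)+3\sqrt{\tfrac{\tau_{\min}\ln(2/\delta)}{2n}}+C_n,
\]
so it remains to bound $\mathfrak{R}_S(\mathcal{F}_\gamma)$ by the first term inside the $\tilde{\mathcal{O}}(\cdot)$ in the statement. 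This step is purely algebraic and introduces no new probabilistic content; it is where the Markov-specific terms enter directly.

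Second, I would translate the covering number of the network output class $\mathcal{H}_X$ (from Lemma~\ref{lemma7}) into one for $\mathcal{F}_\gamma$. Because the margin operator $\mathcal{M}$ is $2$-Lipschitz in its first coordinate and $l_\gamma$ is $(1/\gamma)$-Lipschitz, composition gives
\[
\ln\mathcal{N}(\mathcal{F}_\gamma,\|\cdot\|_2,\epsilon)\;\leq\;\ln\mathcal{N}(\mathcal{H}_X,\|\cdot\|_F,\gamma\epsilon/2).
\]
Substituting Lemma~\ref{lemma7}, the covering number at radius $\gamma\epsilon/2$ becomes
\[
\ln\mathcal{N}(\mathcal{F}_\gamma,\|\cdot\|_2,\epsilon)\;\leq\;\frac{4\|X\|_F^2\ln(2D)}{\gamma^2\epsilon^2}\Bigl(\prod_{l=1}^L c_l^2 s_l^2\Bigr)\Bigl(\sum_{l=1}^L(b_l/s_l)^{2/3}\Bigr)^3,
\]
with $b_l=\max_k\sqrt{c_{\psi_k}m_{l,\psi_k}}\|W_l\|_F$ and $D$ as defined in Lemma~\ref{lemma7}. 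Plugging this into the Dudley integral form
\[
\mathfrak{R}_S(\mathcal{F}_\gamma)\;\leq\;\inf_{\alpha>0}\Bigl(\tfrac{4\alpha}{\sqrt n}+\tfrac{12}{n}\int_\alpha^{\sqrt n}\sqrt{\ln\mathcal{N}(\mathcal{F}_\gamma,\|\cdot\|_2,\epsilon)}\,d\epsilon\Bigr)
\]
and choosing $\alpha=1/n$ (as in \cite{bartlett2017spectrally}) yields, after absorbing sub-dominant polylog terms into $\tilde{\mathcal{O}}(\cdot)$, the target bound
\[
\mathfrak{R}_S(\mathcal{F}_\gamma)\;\leq\;\tilde{\mathcal{O}}\!\Bigl(\frac{\|X\|_F R_{\mathcal W}}{\gamma n}\sqrt{\ln(2D)}\Bigr),
\]
with the stated $R_{\mathcal W}$. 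Finally, combining with the first step completes the proof.

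The main obstacle is the bookkeeping in the Dudley step: one must verify that the integral over $\epsilon$ of $1/\epsilon$ contributes only a logarithmic factor absorbed into the $\tilde{\mathcal{O}}$, and that the product structure of $R_{\mathcal W}$ emerges cleanly from $(\prod c_l^2 s_l^2)^{1/2}$ and $(\sum_l (b_l/s_l)^{2/3})^{3/2}$. A secondary subtlety is that Theorem~\ref{thm 1} was stated for a fixed function class while $\mathcal{H}_X$ here depends on the random $X$; since the covering number bound in Lemma~\ref{lemma7} is a deterministic inequality valid for every realization of $X$, the data-dependent Rademacher complexity can still be controlled uniformly, so this poses no genuine difficulty. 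All remaining steps are standard, and the Markov-specific corrections $\tau_{\min}$ and $C_n$ carry through unchanged from Theorem~\ref{thm 1}.
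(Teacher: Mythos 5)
Your proposal does not address the statement you were asked to prove. The statement is Lemma~\ref{lem2}, a mean-squared-error bound for the ergodic average $S_{n,n_0}(h)=\frac{1}{n}\sum_{j=1}^{n}h(x_{j+n_0})$ of a bounded function along a Markov chain with an $L_2$-spectral gap, quantifying how $\mathbb{E}\bigl[(S_{n,n_0}(h)-\mathcal S[h])^2\bigr]$ decays in terms of $n$, the spectral gap $1-\beta$, the burn-in $n_0$, and the discrepancy $\left\|\frac{d\nu}{d\pi}-1\right\|_2$ between the initial and stationary distributions. What you have written is instead an argument for Theorem~\ref{theorem_gb}, the final generalization bound for equivariant networks, assembled from Theorem~\ref{thm 1}, Lemma~\ref{lemma7}, and the Dudley entropy integral. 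Nothing in your text engages with the quantity $\mathbb{E}\bigl[(S_{n,n_0}(h)-\mathcal S[h])^2\bigr]$ or with the spectral-gap machinery that a proof of Lemma~\ref{lem2} would require.

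For the record, the paper does not prove Lemma~\ref{lem2} either: it imports it verbatim from \cite[Theorem 3.41]{rudolf2011explicit}. An actual proof would proceed by expanding the second moment of $S_{n,n_0}(h)-\mathcal S[h]$ into a double sum of covariances $\mathbb{E}\bigl[(h(x_{i+n_0})-\mathcal S[h])(h(x_{j+n_0})-\mathcal S[h])\bigr]$, controlling the lag-$|i-j|$ covariance by $\beta^{|i-j|}$ via the operator norm of $P-\mathcal S$ on $L_2(\pi)$ (this produces the leading $\frac{2M}{n(1-\beta)}$ term after summing the geometric series), and then accounting for the fact that the chain is started from $\nu$ rather than $\pi$, which contributes the second term proportional to $\beta^{n_0}\left\|\frac{d\nu}{d\pi}-1\right\|_2$. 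None of these steps can be recovered from the covering-number and Rademacher-complexity apparatus you invoke, so the proposal cannot be repaired by local edits; it is a proof of a different result.
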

where $\beta$ is defined in (\ref{eqn_beta}).

\begin{lem}\label{lem A.4}(\cite{paulin2015concentration},Corollary 2.10,McDiarmid’s inequality for Markov chains).
	Let $S=(z_1,z_2,\ldots,z_n)$ be a uniformly ergodic Markov chain with mixing time $ t_{mix}(\epsilon) $(for $ 0 \leq \epsilon \leq 1 $). Let
 \begin{align}\label{eqn_tau_min}
     \tau_{\min} := \inf_{0 \leq \epsilon < 1} t_{mix}(\epsilon) \cdot \left( \frac{2 - \epsilon}{1 - \epsilon} \right)^2.
 \end{align}
	Suppose that $ h : Z \to \mathbb{R} $ satisfies 
	\begin{equation*} h(z) - h(z') \leq \sum_{i=1}^{n} c_i \mathbf{1}_{\{z_i = z_i'\}} \end{equation*}
	 for some $ c=(c_1,\ldots,c_n)\in \mathbb{R}_{+}^n$. Then for any $ t \geq 0 $,
  \begin{align}\label{eqn_mc}
	\emph{Pr}\big(h(S) - \mathbb{E}[h(S)] \geq t\big) \leq  \exp \left( - \frac{2t^2}{ \|c\|_2^2 \tau_{\min}} \right).
 \end{align}
\end{lem}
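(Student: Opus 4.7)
The plan is to reduce this to a martingale Azuma-Hoeffding concentration, with a coupling argument that converts the Markov mixing information into bounds on the martingale differences. Let $\mathcal{F}_i = \sigma(z_1,\ldots,z_i)$ and decompose
$$h(S) - \mathbb{E}[h(S)] = \sum_{i=1}^n D_i, \quad D_i := \mathbb{E}[h(S)\mid \mathcal{F}_i] - \mathbb{E}[h(S)\mid \mathcal{F}_{i-1}],$$
so that $(D_i)$ is a martingale difference sequence. The goal then reduces to controlling the range of each $D_i$ by an effective constant $\tilde c_i$, and invoking the standard Azuma-Hoeffding inequality
$$\mathrm{Pr}\Bigl(\sum_i D_i \ge t\Bigr) \le \exp\bigl(-2t^2/\sum_i \tilde c_i^{\,2}\bigr).$$

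Next I would bound $D_i$ by a Marton-style coupling. Fix the past $z_1,\ldots,z_{i-1}$ and two candidates $z_i, z_i'$ for the $i$-th state; let $(z_j)_{j>i}$ and $(z_j')_{j>i}$ denote Markov-chain continuations from these two states. By the mixing assumption, for every $\varepsilon\in[0,1)$ one can construct a maximal coupling with $\mathrm{Pr}(z_j \ne z_j') \le \varepsilon$ for all $j \ge i + t_{\mathrm{mix}}(\varepsilon)$, and geometrically decaying tails beyond. Combining this with the bounded-differences hypothesis $h(z)-h(z') \le \sum_k c_k \mathbf{1}_{\{z_k \ne z_k'\}}$ gives, under the coupling,
$$|D_i| \le c_i + \sum_{j>i} c_j\,\mathrm{Pr}(z_j \ne z_j'),$$
and summing the coupling tails produces an effective Lipschitz coefficient $\tilde c_i$ proportional to $c_i$ with a prefactor $\tfrac{2-\varepsilon}{1-\varepsilon}$ after Cauchy-Schwarz. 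Taking the infimum over $\varepsilon$ inside the sum $\sum_i \tilde c_i^{\,2}$ collapses the constant to exactly $\tau_{\min}\|c\|_2^2$, matching the statement.

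Plugging $\sum_i \tilde c_i^{\,2} \le \tau_{\min}\|c\|_2^2$ into Azuma-Hoeffding yields the claimed subgaussian bound. The main obstacle is the second step: quantifying the coupling sharply enough to produce the exact constant $\bigl(\tfrac{2-\varepsilon}{1-\varepsilon}\bigr)^2$ with $\tau_{\min} = \inf_\varepsilon t_{\mathrm{mix}}(\varepsilon)\bigl(\tfrac{2-\varepsilon}{1-\varepsilon}\bigr)^2$. Naively summing coupling probabilities gives only a constant of order $t_{\mathrm{mix}}(\varepsilon)/(1-\varepsilon)^2$; the improvement to $(2-\varepsilon)^2/(1-\varepsilon)^2$ requires a refined splitting of the coupling into a ``burn-in'' phase of length $t_{\mathrm{mix}}(\varepsilon)$ (contributing the $c_i$ term) and a geometric-tail phase, together with a careful optimization of the infimum. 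Since this lemma is quoted from \cite{paulin2015concentration}, I would ultimately invoke Paulin's pseudo-spectral-gap analysis for the sharp constant, while the martingale--coupling--Azuma skeleton above is the conceptually cleanest route to the exponent $2t^2/(\tau_{\min}\|c\|_2^2)$.
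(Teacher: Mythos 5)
This lemma appears in the paper only as a quoted result from \cite{paulin2015concentration} (Corollary 2.10) with no proof supplied, so there is no internal argument to compare against. Your martingale-plus-Marton-coupling sketch is a faithful outline of how that reference actually establishes the bound — the Doob decomposition, the coupling bound $|D_i|\le c_i+\sum_{j>i}c_j\,\mathrm{Pr}(z_j\ne z_j')$, and the blocking into a burn-in phase of length $t_{\mathrm{mix}}(\varepsilon)$ plus a geometric tail as the source of the factor $\bigl(\tfrac{2-\varepsilon}{1-\varepsilon}\bigr)^2$ — and deferring to Paulin for the sharp constant (and for the one-sided range bound on each $D_i$ needed to get the factor $2$ in the exponent rather than $1/2$) is exactly what the paper itself does.
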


\begin{lem}\label{lem1}(\cite{truong2022generalization})
	Let $\{z_i\}^n_{i=1}$ be an arbitrary Markov chain on $Z$, and let $\{z_i'\}^n_{i=1}$ be a
	independent copy (replica) of $\{z_i\}^n_{i=1}$. Denote by $S = (z_1, z_2, \ldots, z_n)$, $S' = (z_1', z_2', \ldots , z_n')$,
	and $\mathcal{F_\gamma}$ a class of  functions from $Z\to\mathbb{R}$. Let $\bm{\epsilon}=(\epsilon_1,\ldots,\epsilon_n) $ be a vector
	of i.i.d. Rademacher’s random variables. Then, the following holds:
	\begin{equation}\label{9}
		E_\epsilon\left[E_{S,S'}\left[\sup_{h\in\mathcal{F_\gamma}}\sum_{i=1}^{n}\epsilon_i(h(z_i)-h(z_i'))\right]\right]=E_{S,S'}\left[\sup_{h\in\mathcal{F_\gamma}}\sum_{i=1}^{n}(h(z_i)-h(z_i'))\right].
	\end{equation}
\end{lem}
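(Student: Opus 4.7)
The plan is to carry out the classical symmetrization argument one realization of the Rademacher vector at a time. Fix a sign pattern $\epsilon = (\epsilon_1, \ldots, \epsilon_n) \in \{\pm 1\}^n$ and introduce the swap map $\sigma_\epsilon$ on paired sequences by setting $\tilde z_i := z_i$ and $\tilde z_i' := z_i'$ whenever $\epsilon_i = +1$, and $\tilde z_i := z_i'$, $\tilde z_i' := z_i$ whenever $\epsilon_i = -1$. Then the pointwise algebraic identity
\begin{equation*}
\sum_{i=1}^{n}\epsilon_i\bigl(h(z_i)-h(z_i')\bigr) \;=\; \sum_{i=1}^{n}\bigl(h(\tilde z_i)-h(\tilde z_i')\bigr)
\end{equation*}
holds for every $h \in \mathcal F_\gamma$, and therefore survives taking $\sup_{h\in\mathcal F_\gamma}$ of both sides.

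The next step is to argue that $\sigma_\epsilon$ is measure-preserving with respect to the joint law of $(S,S')$; granting this, one obtains, for each fixed $\epsilon$,
\begin{equation*}
E_{S,S'}\!\Big[\sup_{h\in\mathcal F_\gamma}\sum_{i=1}^{n}\epsilon_i\bigl(h(z_i)-h(z_i')\bigr)\Big]
\;=\;E_{S,S'}\!\Big[\sup_{h\in\mathcal F_\gamma}\sum_{i=1}^{n}\bigl(h(\tilde z_i)-h(\tilde z_i')\bigr)\Big]
\;=\;E_{S,S'}\!\Big[\sup_{h\in\mathcal F_\gamma}\sum_{i=1}^{n}\bigl(h(z_i)-h(z_i')\bigr)\Big].
\end{equation*}
Since the right-hand side no longer depends on $\epsilon$, averaging over $\epsilon$ then yields (\ref{9}) directly.

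The main obstacle is the distributional invariance $\sigma_\epsilon(S,S')\stackrel{d}{=}(S,S')$. In the i.i.d.\ setting it is immediate, because a coordinate-wise swap between two independent samples is trivially measure-preserving on the product law. Here the Markov dependence inside each chain interacts with a single-coordinate swap in a nontrivial way, so one cannot simply appeal to exchangeability of the entries of $S$. I would therefore lean on the assumption that $S'$ is an \emph{independent} replica of $S$ with the identical law: writing the joint density as the product $p_{\nu,Q}(S)\,p_{\nu,Q}(S')$ on $Z^{2n}$ and performing, index by index, the change of variable $(z_i,z_i')\leftrightarrow(z_i',z_i)$ at those $i$ with $\epsilon_i=-1$, the two factors in the product are interchanged coordinate-by-coordinate so that the pushforward integrand coincides with the original. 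Reconciling the within-chain Markov correlations with these cross-chain swaps, and verifying the change-of-variables step rigorously, is the delicate point of the argument, and it is precisely where the independent-copy hypothesis does its work.
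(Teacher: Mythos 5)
Your reduction of the claim to the distributional invariance of the partial swap $\sigma_\epsilon$ is exactly where the argument breaks, and the change-of-variables step you sketch does not go through. The joint density of $(S,S')$ is $\nu(z_1)\prod_{i=2}^{n}Q(z_{i-1},z_i)\cdot\nu(z_1')\prod_{i=2}^{n}Q(z_{i-1}',z_i')$, and each transition factor couples two \emph{adjacent} coordinates of the same chain. Swapping the single pair $(z_i,z_i')$ therefore replaces $Q(z_{i-1},z_i)\,Q(z_i,z_{i+1})$ by $Q(z_{i-1},z_i')\,Q(z_i',z_{i+1})$ (and similarly in the primed factor), so the two factors are not ``interchanged coordinate-by-coordinate'' and the pushforward density does not coincide with the original one. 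Only the global swap $\epsilon=(-1,\dots,-1)$, i.e.\ exchanging the two whole chains, is measure-preserving; for a proper nonempty subset of swapped indices the law of $\sigma_\epsilon(S,S')$ genuinely differs from that of $(S,S')$ whenever the chain is not i.i.d.

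The failure is moreover not repairable by averaging over $\epsilon$. Take $n=2$, $Z=\{0,1\}$, $z_1\sim\mathrm{Unif}\{0,1\}$ and $z_2=z_1$ (a legitimate, if degenerate, Markov chain; a small perturbation makes it uniformly ergodic without changing the conclusion), and $\mathcal{F}_\gamma=\{z\mapsto z,\ z\mapsto -z\}$. Writing $d=z_1-z_1'$, the right-hand side of (\ref{9}) equals $E[2|d|]=1$, while the left-hand side equals $E_\epsilon\bigl[|\epsilon_1+\epsilon_2|\bigr]\,E[|d|]=\tfrac12$. So the per-$\epsilon$ identity you rely on fails already for $\epsilon=(+1,-1)$, and the averaged identity (\ref{9}) fails with it. Your instinct that the independent-copy hypothesis must ``do the work'' is the right thing to question: independence of the two chains buys you the global swap and nothing more. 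Note also that the paper offers no proof of this lemma --- it is imported from \cite{truong2022generalization} --- so there is no argument in the text that resolves this obstacle; a correct treatment requires either additional hypotheses (e.g.\ independence across $i$) or an inequality-based symmetrization that explicitly accounts for the within-chain dependence, not the equality-by-invariance route.
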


\begin{lem}\label{lemB.1}(\cite{bartlett2017spectrally})
For any function $ f : \mathbb{R}^d \to \mathbb{R}^k $ and every $ \gamma > 0 $,
\begin{equation*}
\Pr[{\arg \max}_i f(x)_i = y] \leq \Pr[\mathcal M(f(x), y) \leq 0] \leq R_{\gamma}(f),
\end{equation*}
where the \text{arg max} follows any deterministic tie-breaking strategy.
\end{lem}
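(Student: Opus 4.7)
The plan is to decompose the stated chain into its two separate inequalities and verify each by a pointwise comparison that then passes through probability/expectation. I read the left-most ``$=y$'' in the displayed chain as a typo for ``$\neq y$'', since otherwise the quantity being bounded would be the probability of correct classification, which is not what a margin-loss risk is designed to control.

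\textbf{First inequality.} I would establish the deterministic set inclusion
\[
\{(x,y) : \arg\max_i f(x)_i \neq y\} \subseteq \{(x,y) : \mathcal{M}(f(x),y) \leq 0\}
\]
and then apply monotonicity of $\Pr$. Fix a pair $(x,y)$ on the left. Under the deterministic tie-breaking rule, if every index $j \neq y$ satisfied $f(x)_j < f(x)_y$, then $y$ would be the unique maximizer and the rule would necessarily return $y$, contradicting $\arg\max_i f(x)_i \neq y$. Hence there exists some $j \neq y$ with $f(x)_j \geq f(x)_y$, which gives $\max_{i \neq y} f(x)_i \geq f(x)_y$ and therefore $\mathcal{M}(f(x),y) = f(x)_y - \max_{i \neq y} f(x)_i \leq 0$, proving the inclusion.

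\textbf{Second inequality.} For $\Pr[\mathcal{M}(f(x),y) \leq 0] \leq R_\gamma(f)$, I would use the pointwise comparison $\mathbf{1}\{t \leq 0\} \leq l_\gamma(t)$ for every $t \in \mathbb{R}$, which is immediate from the piecewise definition of $l_\gamma$: on $\{t \leq 0\}$ both sides equal $1$, while on $\{t > 0\}$ the indicator is $0$ and $l_\gamma(t) \geq 0$. Substituting $t = \mathcal{M}(f(x),y)$ and taking expectations over the joint law of $(x,y)$ yields
\[
\Pr[\mathcal{M}(f(x),y) \leq 0] = \mathbb{E}\bigl[\mathbf{1}\{\mathcal{M}(f(x),y) \leq 0\}\bigr] \leq \mathbb{E}\bigl[l_\gamma(\mathcal{M}(f(x),y))\bigr] = R_\gamma(f).
\]

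Neither step presents any substantive obstacle; this is essentially a bookkeeping lemma relating the zero-one margin event to the $1/\gamma$-Lipschitz ramp surrogate. The only place requiring genuine care is the tie-breaking argument in the first step, where one must handle the case $f(x)_j = f(x)_y$ for some $j \neq y$: the deterministic rule may legitimately return either index, but the inclusion still holds because $\max_{i \neq y} f(x)_i \geq f(x)_y$ already forces $\mathcal{M}(f(x),y) \leq 0$, so the bound is unaffected.
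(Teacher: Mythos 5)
Your proof is correct: the paper itself gives no argument for this lemma (it is stated as a citation to \cite{bartlett2017spectrally}), and your two-step argument --- the set inclusion $\{\arg\max_i f(x)_i \neq y\}\subseteq\{\mathcal M(f(x),y)\le 0\}$ under deterministic tie-breaking, followed by the pointwise domination $\mathbf{1}\{t\le 0\}\le l_\gamma(t)$ and taking expectations --- is exactly the standard proof found in that reference. You were also right to read the ``$=y$'' in the displayed statement as a typo for ``$\neq y$'', which is how the quantity appears everywhere else in the paper (e.g.\ in Theorem~\ref{thm 1}).
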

Based on the above lemmas, we provide the proof of Theorem~\ref{thm 1} as follows.
\begin{proof}
	For any Markov dataset $S=\{z_1,\ldots,z_n\}$, where $z_i=(x_i,y_i)$ for any $i\in\{1,2,\ldots,n\}$, and any $h\in\mathcal{F}_\gamma$, we have $\mathcal{R}_\gamma(f)=\mathcal S(h(z))=\int h(z)\pi(z)dz$ and $\widehat{\mathcal{R}}_\gamma(f)=\frac{1}{n}\sum_{i=1}^nh(z_i)$. Since the empirical risk  $\widehat{\mathcal{R}}_\gamma(f)$ depends on the data set $S$, we define $\widehat{\mathbb{E}}_S(h(z))=\widehat{\mathcal{R}}_\gamma(f)$.
   
 The proof consists of applying
	McDiarmid’s inequality  to the following function $\Phi$ defined for any Markov data $S$ by $$\Phi(S)=\sup_{h\in\mathcal{F}_\gamma}\mathcal S(h(z))-\widehat{\mathbb{E}}_S(h(z)).$$
	Let $S$ and $S'$ be two samples differing by one point, say $z_m$ in $S$ and $z_m'$ in $S'$. Then, since $\sup{A}-\sup{B}\leq \sup(A-B)$, we have
	\begin{align}
		\Phi(S)-\Phi(S')
		\leq \sup_{h\in\mathcal{F}_\gamma}(\widehat{\mathbb{E}}_{S'}(h(z))-\widehat{\mathbb{E}}_S(h(z)))
  =\sup_{h\in\mathcal{F}_\gamma}\frac{h(z_m')-h(z_m)}{n}
		\leq\frac{1}{n},
	\end{align}
 the last inequality is due to $h(z)\in[0,1].$
	Then by Lemma $\ref{lem A.4}$, for any $\delta>0$, with probability at least $1-\delta/2$, the following holds
	\begin{equation}\label{14}
		\Phi(S)\leq \mathbb{E}[\Phi(S)]+\sqrt{\frac{\tau_{min}\ln(2/\delta)}{2n}}.
	\end{equation}
	Next, we bound the expectation of the right-hand side as follows:
	\begin{align}\label{15}
		\nonumber \mathbb{E}[\Phi(S)]
		& =\mathbb{E}\left[\sup_{h\in\mathcal{F}_\gamma}\left(\mathcal S(h(z))-\widehat{\mathbb{E}}_S(h)\right)\right] \\
		& \nonumber=\mathbb{E}\left[\sup_{h\in\mathcal{F}_\gamma}\left(\mathcal S(h(z))-\mathbb{E}(\widehat{\mathbb{E}}_{S'}(h))+\mathbb{E}(\widehat{\mathbb{E}}_{S'}(h))-\widehat{\mathbb{E}}_S(h)\right)\right] \\
		& \leq \mathbb{E}\left[\sup_{h\in\mathcal{F}_\gamma}\left(\mathcal S(h(z))-\mathbb{E}(\widehat{\mathbb{E}}_{S'}(h))\right)\right] +\mathbb{E}\left[\sup_{h\in\mathcal{F}_\gamma}(\mathbb{E}(\widehat{\mathbb{E}}_{S'}(h))-\widehat{\mathbb{E}}_S(h))\right].
	\end{align}
  In the case of i.i.d. data, $\mathcal S(h(z))=\mathbb{E}(\widehat{\mathbb{E}}_{S'}(h))$, 
but for Markov datasets, we have 
	\begin{align*}
		\mathcal S(h(z))-\mathbb{E}(\widehat{\mathbb{E}}_{S'}(h))
		&=\mathbb{E}[\mathcal S(h(z))-\frac{1}{n}\sum_{i=1}^{n}h(z_i')]  \\
		&= \mathbb{E}[\mathcal S(h(z))-S_{n,0}(h)]\\
		&\leq \sqrt{\mathbb{E}[(\mathcal S(h(z))-S_{n,0}(h))^2]}\quad(\text{Jensen's inequality})\\
		&\leq \sqrt{\frac{2}{n(1-\beta)}+\frac{64}{n^2(1-\beta)^2}\left\|\frac{d\nu}{d\pi}-1\right\|_2}=C_n,\quad(\text{Lemma~\ref{lem2}})
	\end{align*}
	which indicates that   $\mathbb{E}[\sup_{h\in\mathcal{F}_\gamma}(\mathcal S(h(z))-\mathbb{E}(\widehat{\mathbb{E}}_{S'}(h))]\leq C_n$.
	
	On the other hand, 
	\begin{align*}
		&\mathbb{E}[\sup_{h\in\mathcal{F}_\gamma}(\mathbb{E}(\widehat{\mathbb{E}}_{S'}(h))-\widehat{\mathbb{E}}_S(h))]\\
		&=\mathbb{E}[\sup_{h\in\mathcal{F}_\gamma}(\mathbb{E}_{S'}(\frac{1}{n}\sum_{i=1}^{n}h(z_i')-\frac{1}{n}\sum_{i=1}^{n}h(z_i))] \\
		&\leq \mathbb{E}[\mathbb{E}_{S'}[\sup_{h\in\mathcal{F}_\gamma}(\frac{1}{n}\sum_{i=1}^{n}h(z_i')-\frac{1}{n}\sum_{i=1}^{n}h(z_i))]]&(\text{sub-additivity
of sup.)}  \\
		&=\mathbb{E}_{S,S'}[\sup_{h\in\mathcal{F}_\gamma}(\frac{1}{n}\sum_{i=1}^{n}(h(z_i')-h(z_i)))] \\
		&=\mathbb{E}_{\epsilon,S,S'}[\sup_{h\in\mathcal{F}_\gamma}\epsilon_i(\frac{1}{n}\sum_{i=1}^{n}(h(z_i')-h(z_i)))] &(\text{Lemma~\ref{lem1}}) \\
		&\leq \mathbb{E}[\sup_{h\in\mathcal{F}_\gamma}\frac{1}{n}\sum_{i=1}^{n}\epsilon_ih(z_i')]+ \mathbb{E}[\sup_{h\in\mathcal{F}_\gamma}\frac{1}{n}\sum_{i=1}^{n}\epsilon_ih(z_i)] \\
		&\leq 2\mathbb{E}[\mathfrak{R}_S(\mathcal{F}_\gamma)] .
	\end{align*}
	Then, with probability $1-\delta$ the following holds:
	\begin{equation}
		R_\gamma(f)\leq\hat{R}_\gamma(f)+ 2\mathbb{E}[\mathfrak{R}_S(\mathcal{F}_\gamma)] +\sqrt{\frac{\tau_{min}\ln(2/\delta)}{2n}}+C_n.
	\end{equation}
	To derive a bound in terms of $ \mathfrak{R}_S(\mathcal{F}_\gamma) $, we observe that, by the definition of empirical Rademacher complexity, changing one point in $ S $ changes $ \mathfrak{R}_S(\mathcal{F}_\gamma) $ by at most $ \frac{1}{n} $. Then,  using Lemma~\ref{lem A.4} again, with probability $ 1 - \frac{\delta}{2} $ the following holds:
	\begin{equation*} E[\mathfrak{R}_S(\mathcal{F}_\gamma)]\leq \mathfrak{R}_S(\mathcal{F}_\gamma) + \sqrt{\frac{\tau_{min}\ln(2/\delta)}{2n}}.
	\end{equation*}
 Finally, we obtain the final result using Lemma~\ref{lemB.1}.
\end{proof}
\section{Proof of Lemma~\ref{linear_covering}}\label{Appendix C}

First, let us recall the Maurey sparsification lemma.

\begin{lem}\label{lem_sparsification}(Maurey, cf. \cite{pisier1981remarques})
	Fixed a Hilbert space $\mathcal H$ with norm $\|\cdot\|$. Let $U\in\mathcal H$ be given with representation $U =\sum_{i=1}^{d}\alpha_iV_i$ where $V_i\in\mathcal H$ and $\alpha_i \geq 0$. Then for any
	positive integer $K$, there exists a choice of nonnegative integers $(k_1, \ldots , k_d)$, $\sum_{i=1}^dk_i=K$, such that
	\begin{equation*}
		\left\|U-\frac{\|\alpha\|_1}{K}\sum_{i=1}^{d}k_iV_i\right\|^2\leq\frac{\|\alpha\|_1}{K}\sum_{i=1}^{d}\alpha_i\|V_i\|^2\leq\frac{\|\alpha\|^2_1}{K}\max_i\|V_i\|^2.
	\end{equation*}
\end{lem}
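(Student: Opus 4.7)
The plan is to prove Maurey's sparsification lemma by the probabilistic (empirical) method: exhibit a random average of $K$ atoms chosen from $\{V_1,\dots,V_d\}$ whose expected squared deviation from $U$ meets the claimed bound, then invoke the usual "some realization beats the mean" argument to produce the deterministic multiplicities $(k_1,\dots,k_d)$.

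\textbf{Setup.} First I would normalize by $\|\alpha\|_1 := \sum_i \alpha_i$ so that $p_i := \alpha_i/\|\alpha\|_1$ defines a probability distribution on $\{1,\dots,d\}$, and rewrite the hypothesis as $U/\|\alpha\|_1 = \sum_{i=1}^d p_i V_i$. Let $Y_1,\dots,Y_K$ be i.i.d.\ $\mathcal H$-valued random vectors with $\Pr(Y_j = V_i) = p_i$, so that $\mathbb E[Y_j] = U/\|\alpha\|_1$ for each $j$. Define the sample mean $\bar Y := \frac{1}{K}\sum_{j=1}^K Y_j$, and let $k_i := \#\{j : Y_j = V_i\}$, so that $\sum_i k_i = K$ and $\bar Y = \frac{1}{K}\sum_{i=1}^d k_i V_i$. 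Multiplying through by $\|\alpha\|_1$ gives exactly the random candidate that appears inside the norm of the target inequality:
\begin{equation*}
\|\alpha\|_1 \bar Y \;=\; \frac{\|\alpha\|_1}{K} \sum_{i=1}^d k_i V_i.
\end{equation*}

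\textbf{Variance computation.} The core step is to bound $\mathbb E\,\bigl\|\,U - \|\alpha\|_1 \bar Y\,\bigr\|^2 = \|\alpha\|_1^2 \,\mathbb E\,\bigl\|\bar Y - \mathbb E[\bar Y]\bigr\|^2$. Using the Hilbert-space Pythagoras identity for centered independent summands (cross terms $\langle Y_j - \mathbb E Y_j,\, Y_{j'} - \mathbb E Y_{j'}\rangle$ vanish in expectation by independence), I would write
\begin{equation*}
\mathbb E\,\bigl\|\bar Y - \mathbb E[\bar Y]\bigr\|^2 \;=\; \frac{1}{K^2}\sum_{j=1}^K \mathbb E\,\|Y_j - \mathbb E Y_j\|^2 \;=\; \frac{1}{K}\,\mathbb E\,\|Y_1 - \mathbb E Y_1\|^2,
\end{equation*}
and then use $\mathbb E\|Y_1 - \mathbb E Y_1\|^2 \leq \mathbb E\|Y_1\|^2 = \sum_i p_i\|V_i\|^2 = \frac{1}{\|\alpha\|_1}\sum_i \alpha_i \|V_i\|^2$. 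Combining these yields
\begin{equation*}
\mathbb E\,\Bigl\|\,U - \tfrac{\|\alpha\|_1}{K}\sum_{i=1}^d k_i V_i\,\Bigr\|^2 \;\leq\; \frac{\|\alpha\|_1}{K}\sum_{i=1}^d \alpha_i\|V_i\|^2.
\end{equation*}

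\textbf{Derandomization and the second inequality.} Since the expectation of a nonnegative random variable is at least its essential infimum, there must exist a realization of $(Y_1,\dots,Y_K)$—equivalently, a deterministic choice of nonnegative integers $(k_1,\dots,k_d)$ with $\sum_i k_i = K$—achieving the bound. The second inequality is then immediate from $\sum_i \alpha_i \|V_i\|^2 \leq (\max_i \|V_i\|^2)\sum_i \alpha_i = \|\alpha\|_1 \max_i \|V_i\|^2$. The only point requiring care is the independence-based variance identity in a general Hilbert space; this is routine but is the sole nontrivial ingredient, as the rest is bookkeeping between the probability distribution $p_i$ and the original coefficients $\alpha_i$.
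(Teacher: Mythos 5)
Your proof is correct. The paper does not prove this lemma at all---it is imported as a known result with a citation to Pisier---so there is no in-paper argument to compare against; your probabilistic (empirical-method) argument is the standard proof of Maurey's sparsification lemma and every step checks out: the i.i.d.\ sampling $\Pr(Y_j=V_i)=p_i=\alpha_i/\|\alpha\|_1$ gives $\mathbb{E}[\bar Y]=U/\|\alpha\|_1$, the cross terms vanish by independence so that $\mathbb{E}\|\bar Y-\mathbb{E}\bar Y\|^2=\frac{1}{K}\mathbb{E}\|Y_1-\mathbb{E}Y_1\|^2\le\frac{1}{K}\sum_i p_i\|V_i\|^2$, and rescaling by $\|\alpha\|_1^2$ yields exactly $\frac{\|\alpha\|_1}{K}\sum_i\alpha_i\|V_i\|^2$, after which derandomization and the trivial estimate $\sum_i\alpha_i\|V_i\|^2\le\|\alpha\|_1\max_i\|V_i\|^2$ finish the claim. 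The one degenerate case worth a half-sentence is $\|\alpha\|_1=0$, where $U=0$ and the inequality holds vacuously with any choice of $(k_1,\ldots,k_d)$.
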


Based on the Maurey sparsification lemma, we begin the proof of Lemma~\ref{linear_covering}.
\begin{proof}

First, by the conclusion of Lemma~\ref{lem_eq}, finding a covering for $\mathcal{H}'_l$ is equivalent to finding one for $\widehat{\mathcal{H}}_l$. Therefore, it suffices to compute the covering number for $\widehat{\mathcal{H}}_l$. This process can be broken down into three main steps.

\textbf{Step 1(Construct the $\epsilon$-cover $\mathcal C$):}

Let the matrix $\widehat X^{l-1}\in\mathbb R^{d_{l-1}\times n}$ be given, and construct the matrix $Y\in\mathbb R^{d_{l-1}\times n}$ by rescaling each row of $\widehat X^{l-1}$ to have unit 2-norm: $Y_{ij}:=\widehat X^{l-1}_{ij}/a_i$, where $a_i=\|\widehat X^{l-1}_{i:}\|_2$ is the 2-norm of the $i$-th row of $\widehat X^{l-1}$. Set $N := 2\sum_{k}c_{\psi_k}m_{l,\psi_k}m_{l-1,\psi_k}\dim_{\psi_k}$, $K:=\lceil \max_k c_{\psi_k}m_{l,\psi_k} \| W_l \|_F^2 \| X_{l-1} \|_F^2/\epsilon^2\rceil$ and $\bar a:=\max_{k}\sqrt{c_{\psi_k}m_{l,\psi_k}}\| W_l\|_F\| X^{l-1}\|_{F}$. Next, define
	\begin{align}
    \{V_1, \cdots, V_N\} := & \Big\{ m \bm{e}_{a,j,k} \bm{e}_{b,i,k}^T Y : \, m \in \{-1, +1\}, \, a \in \{1, \dots, \dim_{\psi_k}\}, \notag \\
    & \quad b \in \{ a, \frac{\dim_{\psi_k}}{c_{\psi_k}} + a, \frac{\dim_{\psi_k}}{c_{\psi_k}} + 2a, \dots, \frac{\dim_{\psi_k}}{c_{\psi_k}} + (c_{\psi_k} - 1)a \bmod \dim_{\psi_k} \}, \notag \\
    & \quad j \in \{1, \dots, m_{l,\psi_k}\}, \, i \in \{1, \dots, m_{l-1,\psi_k}\}, \, k \in I \Big\}. \label{eq:V_set}
\end{align}
in which $\bm{e}_{a,j,k} ,\bm{e}_{b,i,k}$ represent the unit vectors of the 
$(\psi_k, j,i)$ block, with their $a$-th and $b$-th components set to 1, respectively.
\begin{equation}\label{eq_cover}
    \mathcal{C} := \left\{ \frac{\bar{a}}{K} \sum_{i=1}^{N} k_i V_i : k_i \geq 0, \sum_{i=1}^{N} k_i = K \right\} = 
    \left\{ \frac{\bar{a}}{K} \sum_{i=1}^{N} V_{i_j} : (i_1, \cdots, i_k) \in [N]^K \right\}
\end{equation}
where the $k_i$'s are integers.
 
\textbf{Step 2 (Demonstrate that $\mathcal C$ is the desired cover):}

 Now with the definition of $V_i$ and $Y$ implies
	\begin{equation*}
	\max_i\|V_i\|_F\leq\max_j\|\bm{e}_j^T Y\|_2=\max_i\frac{\|\bm{e}_j^T X\|_2}{\|\bm{e}_j^T X\|_2}=1.
	\end{equation*}

We will construct a cover element within  $\mathcal C$ using the following technique: the basic Maurey lemma is applied to  non-$l_1$ norm balls simply by rescaling.

$\bullet$  Define $\alpha\in\mathbb R^{d_l\times d_{l-1}}$ to be a "rescaling matrix" where every element of column $i$ is equal to $a_i$; the purpose of $\alpha$ is to annul the rescaling of $\widehat X^{l-1}$ introduced by $Y_{l-1}$, meaning $\widehat W_l\widehat X^{l-1}=(\alpha\odot \widehat W_l)Y_{l-1}$ where "$\odot$" denotes element-wise product. 

$\bullet$ Define $B:=\alpha\odot \widehat W_l$, whereby using the fact that $\widehat W_l$ is a block diagonal matrix, for clarity, using (\ref{W_decomposition}) as a special case, we obtain:
\begin{equation*}
	\resizebox{\textwidth}{!}{$
		B = \begin{bmatrix}
			\left[\widehat W_l(\psi_1,j,i)\odot\alpha(\psi_1,j,i)\right] & 0 & \cdots & 0 \\
			0 & \left[\widehat W_l(\psi_2,j,i)\odot\alpha(\psi_2,j,i)\right] & 0 & 0 \\
			\vdots & \cdots & \cdots & \vdots \\
			0 & \cdots & 0 & \left[\widehat W_l(\psi_C,j,i)\odot\alpha(\psi_C,j,i)\right]
		\end{bmatrix}
		$}
\end{equation*}
	Here, we partition matrix $\alpha$, dividing it into blocks $\alpha(\psi_k,j,i)$ of size $\text{dim}_{\psi_k}\times \text{dim}_{\psi_k}$ each. Then
$\|B\|_1=\sum_{i,j,k}\|\widehat W_l(\psi_k,j, i)\odot\alpha(\psi_k,j, i)\|_1$, we need to bound $\|\widehat W_l(\psi_k,j, i)\odot\alpha(\psi_k,j, i)\|_1$, Since $\psi_k$  has three types, we will discuss it in three separate cases. First, if $\psi_k$ is real type, i.e. $c_{\psi_k}=1$, then 
$\widehat W_l(\psi_k,j, i)=\lambda\textbf{I}_{\text{dim}_{\psi_k}}$,
\begin{align*}
	\|\widehat W_l(\psi_k,j, i)\odot\alpha(\psi_k,j, i)\|_1&=\sum_{m=1}^{\dim_{\psi_k}}|\lambda\alpha_{mm}(\psi_k,j, i)| \\
    &\leq \sqrt{\sum_{m=1}^{\dim_{\psi_k}}\lambda^2}\sqrt{\sum_{m=1}^{\dim_{\psi_k}}\alpha_{mm}^2(\psi_k,j, i)}\\
 &=\|\widehat W_l(\psi_k,j, i)\|_F\sqrt{\sum_{m=1}^{\dim_{\psi_k}}\alpha_{mm}^2(\psi_k,j, i)}.
\end{align*}
Second, if $\psi_k$ is complex type, i.e. $c_{\psi_k}=2$, then $\widehat W_l(\psi_k,j, i)=\left[
            \begin{array}{cc}
              a & -b \\
              b & a \\
            \end{array}
          \right]
     \otimes \textbf{I}_{\text{dim}_{\psi_k}/2},$
\begin{align*}
	\|\widehat W_l(\psi_k,j, i)\odot\alpha(\psi_k,j, i)\|_1&=\sum_{m=1}^{\dim_{\psi_k}}(|a\alpha_{mm}(\psi_k,j, i)|+|b\alpha_{mm}(\psi_k,j, i)|)\\
 &\leq \sqrt{\sum_{m=1}^{\dim_{\psi_k}}(a^2+b^2)}\sqrt{2\sum_{m=1}^{\dim_{\psi_k}}\alpha_{mm}^2(\psi_k,j, i)}\\
 &=\sqrt{2}\|\widehat W_l(\psi_k,j, i)\|_F\sqrt{\sum_{m=1}^{\dim_{\psi_k}}\alpha_{mm}^2(\psi_k,j, i)}.
\end{align*}
Third, if $\psi_k$ is  quaternionic type, i.e. $c_{\psi_k}=4$, then 
$$\widehat W_l(\psi_k,j, i)=\left[
            \begin{array}{cccc}
        a & -c & -b & -d \\
        c & a & d & -b \\
        b & -d & a & c \\
        d & b & -c & a \\
            \end{array}
          \right] \otimes \textbf{I}_{\text{dim}_{\psi_k}/4},$$
\begin{align*}
	\|\widehat W_l(\psi_k,j, i)\odot\alpha(\psi_k,j, i)\|_1&=\sum_{m=1}^{\dim_{\psi_k}}(|a|+|b|+|c|+|d|)\alpha_{mm}(\psi_k,j, i)\\
 &\leq \sqrt{\sum_{m=1}^{\dim_{\psi_k}}(a^2+b^2+c^2+d^2)}\sqrt{4\sum_{m=1}^{\dim_{\psi_k}}\alpha_{mm}^2(\psi_k,j, i)}\\
 &=\sqrt{4}\|\widehat W_l(\psi_k,j, i)\|_F\sqrt{\sum_{m=1}^{\dim_{\psi_k}}\alpha_{mm}^2(\psi_k,j, i)}.
\end{align*}
Based on the above three results, we can conclude:
 
	\begin{align*}
    \|B\|_1 &\leq \sum_{i,j,k} \sqrt{c_{\psi_k}} \|\widehat{W}_l(\psi_k,j,i)\|_F 
\sqrt{\sum_{m=1}^{\dim_{\psi_k}} \alpha_{mm}^2(\psi_k,j,i)} \\
&\leq \sqrt{\sum_{i,j,k} \|\widehat{W}_l(\psi_k,j,i)\|_F^2} 
\sqrt{\sum_{i,j,k}c_{\psi_k} \sum_{m=1}^{\dim_{\psi_k}} \alpha_{mm}^2(\psi_k,j,i)} \quad (\text{Cauchy-Schwarz inequality})\\
 & \leq \sqrt{\sum_{i,j,k} \|\widehat{W}_l(\psi_k,j,i)\|_F^2} \sqrt{\max_k c_{\psi_k} m_{l,\psi_k} \sum_k \sum_{m=1}^{\dim_{\psi_k}} \sum_{i=1}^{m_{l-1,\psi_k}} \alpha_{mm}^2(\psi_k,j,i)} \\
    &=  \sqrt{\sum_{i,j,k} \|\widehat{W}_l(\psi_k,j,i)\|_F^2} \sqrt{\max_k c_{\psi_k}m_{l,\psi_k} \sum_{i=1}^{d_{l-1}} a_i^2} \\
    &= \max_k  \sqrt{\sum_{i,j,k} \|\widehat{W}_l(\psi_k,j,i)\|_F^2} \sqrt{\max_k c_{\psi_k} m_{l,\psi_k} \| X^{l-1} \|_F} \\
    &= \max_k \sqrt{c_{\psi_k}m_{l,\psi_k}} \|\widehat{W}_l\|_F \| \widehat{X}^{l-1} \|_F \\
    &= \max_k \sqrt{c_{\psi_k}m_{l,\psi_k}} \| W_l \|_F \| X_{l-1} \|_F = \bar{a}.
\end{align*}

	Consequently, 
	\begin{equation*}
		\widehat W_l\widehat X^{l-1}=\sum_{i,j,B_{ij}\neq0}B_{ij}\bm{e}_i\bm{e}_j^TY=\|B\|_1\sum_{i}\frac{B_{ij}}{\|B\|_1}\bm{e}_i\bm{e}_j^TY\in\bar{a}\cdot \text{conv}(\{V_1,\cdots,V_N\}),
	\end{equation*}
	where conv($\{V_1,\cdots,V_N\}$) is the convex hull of $\{V_1,\cdots,V_N\}$.
	
$\bullet$	Combining the preceding constructions with Lemma~\ref{lem_sparsification}, there exist nonnegative integers $(k_1,\cdots,k_N)$ with $\sum_ik_i=K$ with 
	\begin{align*}
		\left\|\widehat W_l\widehat X^{l-1}-\frac{\bar{a}}{K}\sum_{i=1}^{N}k_iV_i\right\|_F^2&= \left\|BY_{l-1}-\frac{\bar{a}}{K}\sum_{i=1}^{N}k_iV_i\right\|_F^2\\
		&\leq\frac{\bar{a}^2}{K}\max_i\|V_i\|_F\\
		&\leq\frac{\max_k c_{\psi_k}m_{l,\psi_k}\| W_l \|_F^2\| X_{l-1}\|_{F}^2}{K}\leq\epsilon^2.
	\end{align*} 
The desired cover element is thus $\frac{\bar{a}}{K}\sum_{i=1}^{N}k_iV_i\in\mathcal{C}$.
 
\textbf{Step 3 (Calculate  the upper bound of $|\mathcal C|$):}
From Equation (\ref{eq:V_set}), we calculate $N=2\sum_{k}c_{\psi_k}m_{l,\psi_k}m_{l-1,\psi_k}\dim_{\psi_k}=2D_l$. Subsequently, using Equation (\ref{eq_cover}), it follows that $|\mathcal C|\leq N^K$. Thus, by applying Lemma 5, we conclude:
	\begin{align*}
		\ln\mathcal N(W_lX_{l-1},\|\cdot\|_F,\epsilon)&=\ln\mathcal N(\widehat W_l\widehat X^{l-1},\|\cdot\|_F,\epsilon)\\&\leq\ln|\mathcal{C}|\\
        &\leq \left\lceil\frac{\max_k c_{\psi_k}m_{l,\psi_k}\| W_l\|_F^2\| X_{l-1}\|_{F}^2}{\epsilon^2}\right\rceil\ln(2D_l).
	\end{align*}
\end{proof}
\section{The proof of Lemma~\ref{lemma7}}

The proof of this lemma is structured into two main steps. First, we employ induction over the layers to establish the cover of the entire network. Second, we determine the resolution $\epsilon_l$ for each layer based on $\epsilon$. Using Lemma~\ref{linear_covering}, we compute the logarithm of the covering number for each layer and sum these values to obtain the logarithm of the covering number for the entire network.

\textbf{Step 1: Construct the $\epsilon$-cover}

Let $\mathcal B_l=\{W_l,W_l\rho_{l-1}=\rho_{l}W_l,\|W_l\|_2\leq s_l\},l\in[L]$.
Inductively construct covers $\mathcal C_1,\dots,\mathcal C_L$ as follows.

$\bullet$ Choose an $\epsilon_1$-cover $\mathcal C_1$ of $\{W_1X,W_i\in \mathcal B_1\}$, thus
\begin{equation*}
    \mathcal N(\{W_1X,W_1\in \mathcal B_1\},\|\cdot\|_F,\epsilon_1)\leq |\mathcal C_1|:=N_1
\end{equation*}

$\bullet$ For every element $X'_l\in \mathcal{C}_l$, construct an $\epsilon_{l+1}$-cover $\mathcal C_{l+1}(X'_l)$ of $\{W_{l+1}\sigma_{l}X'_{l}$, $W_{l+1}\in\mathcal B_{l+1}\}$, then
\begin{equation*}
   \mathcal N(\{W_{l+1}\sigma_{l}X'_{l},W_{l+1}\in\mathcal B_{l+1}\},\|\cdot\|_F,\epsilon_{l+1}) \leq |\mathcal C_{l+1}(X'_l)|:=N_{l+1}
\end{equation*}
Lastly, form the cover 
$$\mathcal C_{l+1}=\bigcup\limits_{X'_l\in \mathcal{C}_l} \mathcal C_{l+1}(X'_l),$$
whose cardinality satisfies
$$|\mathcal C_{l+1}|=|\mathcal C_{l}|N_{l+1}=\prod\limits_{i=1}^{l+1} N_i.$$

$\bullet$ Define $\mathcal C=\{\sigma_LX_L',X_L'\in \mathcal C_L\}$, by construction,
$$|\mathcal C|=|\mathcal C_L|=|\mathcal C_{L-1}|N_{L}=\prod\limits_{l=1}^{L} N_l.$$
 We will show that $\mathcal C$ serves as an $\epsilon$-cover of $\mathcal H_L$, where 
$\epsilon:= \sum_{j=1}^{L} \epsilon_{j}c_j \prod\limits_{l=j+1}^{L}c_{l}s_l .$ Denote $X_l=W_l\sigma_{l-1}W_{l-1}\dots,\sigma_1W_1X$, $l\in[L]$. For any $h\in \mathcal{H}_L$, there exists $h'\in\mathcal C$ such that:
\begin{align*}
    \|h-h'\|_F=&\|\sigma_LX_L-\sigma_LX'_L\|_F\\
    &\leq c_L\|X_L-X'_L\|_F\\
    &\leq c_L(\|W_L\sigma_{L-1}X_{L-1}-W_L\sigma_{L-1}X'_{L-1}\|_F+\|W_L\sigma_{L-1}X'_{L-1}-X'_L\|_F)\\
    &\leq c_Ls_L\|\sigma_{L-1}X_{L-1}-\sigma_{L-1}X'_{L-1}\|_F+c_L\epsilon_L\\
    &\vdots\\
    &\leq \sum_{j=1}^{L} \epsilon_{j}c_j \prod\limits_{l=j+1}^{L}c_{l}s_l .
\end{align*}

\textbf{Step 2: Calculate the covering number}

	The per-layer cover resolutions $(\epsilon_1,\ldots,\epsilon_L)$ set
	according to
	\begin{equation*}
		\epsilon_l:=\frac{\alpha_l\epsilon}{c_l\prod_{j>l}c_js_j}
	\end{equation*}
	where $\alpha_l:=\frac{1}{\bar\alpha}\left(\frac{\max_k \sqrt{c_{\psi_k}m_{l,\psi_k}} \| W_l \|_F}{s_l}\right)^{2/3}$ and $\bar\alpha:=\sum_{l=1}^{L}\left(\frac{\max_k \sqrt{c_{\psi_k}m_{l,\psi_k}} \| W_l \|_F}{s_l}\right)^{2/3}$.
	By this choice, it follows that the final cover resolution $\epsilon$ provided by Step 1 satisfies
	\begin{equation*}
     \sum_{j=1}^{L}\epsilon_jc_j\prod_{l=j+1}^{L}c_ls_l=\sum_{j=1}^{L}\alpha_j\epsilon=\epsilon.
	\end{equation*}
	
	Within the rest of the proof, a pivotal strategy involves utilizing the covering number estimates furnished by Lemma~\ref{linear_covering}.  
	\begin{align}
    & \ln\mathcal{N}(\mathcal{H}_L, \|\cdot\|_F, \epsilon)=\ln|\mathcal C|=\sum_{l=1}^{L}\ln|\mathcal C_l| \nonumber \\ 
    & \leq \sum_{l=1}^{L} 
    \frac{b_l^2 \|\sigma_{l-1}X_{l-1}\|_F^2}{\epsilon_l^2} \ln(2D_l) 
    \quad (\text{Lemma~\ref{linear_covering}}),\label{22}
    \end{align}
where $b_l=\max_k \sqrt{c_{\psi_k}m_{l,\psi_k}} \| W_l \|_F$,
	$D_l=\sum_{k}c_{\psi_k} m_{l,\psi_k}m_{l-1,\psi_k}\text{dim}\psi_k$.
	
 To simplify
	this expression, note for any $(W_1,\cdots,W_{l-1})$ that
	\begin{align}
	\|\sigma_{l-1}X_{l-1}\|_F& = \|\sigma_{l-1}X_{l-1} - \sigma_{l-1}(0)\|_F \nonumber \\
	& \leq c_{l-1}\|X_{l-1} - 0\|_F \nonumber \\
	& \leq c_{l-1}\|W_{l-1}\|_2 \|\sigma_{l-2}X_{l-2}\|_F \nonumber \\
	& \vdots \nonumber \\
	& \leq \|X\|_F \prod_{j=1}^{l-1} c_j s_j.\label{23}
\end{align}

	Combining (\ref{22}) and (\ref{23}) and subsequently plugging in the chosen value for $\epsilon_i$, we derive the following result:
	\begin{align*}
		\ln\mathcal N(\mathcal H_L,\|\cdot\|_F,\epsilon) & \leq\sum_{l=1}^{L}  \frac{b_l^2\|X\|_F^2\prod_{j=1}^{l-1}c_j^2s_j^2}{\epsilon_l^2}
		\ln(2D_l)\\
		& =\frac{\|X\|_F^2\ln(2\max_lD_l)\prod_{j=1}^{L}c_j^2s_j^2}{\epsilon^2}\sum_{l=1}^{L}
		\frac{\max_k c_{\psi_k}m_{l,\psi_k}\| W_l\|_F^2\| X_{l-1}\|_{F}^2}{\alpha_l^2s_l^2}\\
		& =\frac{\|X\|_F^2\ln(2\max_lD_l)\prod_{j=1}^{L}c_j^2s_j^2}{\epsilon^2}\left(\bar{\alpha}^3\right).
	\end{align*}

\section{Proof of Theorem~\ref{theorem_gb}}\label{proof_of_thm8}

\begin{lem}(Dudley Entropy Integral)\label{lem_Dudley}
	Let $\mathcal{F}$ be a real-valued function class taking values in $[0, 1]$, and assume that $0\in\mathcal{F}$. Then
	\begin{equation}\label{19}
		\mathfrak{R}_S(\mathcal F)\leq\inf_{\alpha>0}\left(\frac{4\alpha}{\sqrt{n}}+\frac{12}{n}\int_{\alpha}^{\sqrt{n}}
		\sqrt{\log\mathcal{N}(\mathcal{F}_{|S},\|\cdot\|_F),\epsilon)}d\epsilon\right).
	\end{equation}
\end{lem}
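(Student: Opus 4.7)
The plan is to prove this classical bound via the \emph{chaining technique}, approximating each $f \in \mathcal{F}$ at a sequence of increasingly fine resolutions so that the Rademacher complexity of the whole class is replaced by a sum of Rademacher complexities of finite difference classes, each controllable by Massart's finite-class lemma. Summing the resulting geometric contributions and comparing to a Riemann sum yields the entropy integral.

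First I would fix $\alpha > 0$ and set up dyadic scales $\epsilon_k := 2^{-k}\sqrt{n}$ for $k = 0,1,2,\dots$, with $K$ the smallest integer satisfying $\epsilon_K \leq \alpha$. For each $k$, choose a minimum $\epsilon_k$-cover $\mathcal{C}_k$ of $\mathcal{F}_{|S}$ under $\|\cdot\|_F$ (which coincides with the Euclidean norm on $\mathbb{R}^n$ when functions are identified with their sample values). Since $\mathcal{F} \subset [0,1]^n$ and $0 \in \mathcal{F}$, the all-zero vector is a valid $\sqrt{n}$-cover, so one may take $\mathcal{C}_0 = \{0\}$. For each $f \in \mathcal{F}$ let $\pi_k(f) \in \mathcal{C}_k$ denote a nearest element, so $\pi_0(f) = 0$ and $\|f - \pi_k(f)\|_F \leq \epsilon_k$.

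Second, write the telescoping identity
\begin{equation*}
f \;=\; \bigl(f - \pi_K(f)\bigr) \;+\; \sum_{k=1}^{K}\bigl(\pi_k(f) - \pi_{k-1}(f)\bigr)
\end{equation*}
and substitute into the definition of $\mathfrak{R}_S(\mathcal{F})$. For the residual term, Cauchy-Schwarz with $\|\epsilon\|_2 = \sqrt{n}$ gives $\sup_f \sum_i \epsilon_i (f - \pi_K(f))(x_i) \leq \sqrt{n}\,\epsilon_K \leq \sqrt{n}\,\alpha$, contributing $\alpha/\sqrt{n}$ to $\mathfrak{R}_S(\mathcal{F})$ — this is the source of the $4\alpha/\sqrt{n}$ term after constant tuning. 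For each chaining level $k$, the set of difference vectors $\{\pi_k(f) - \pi_{k-1}(f): f \in \mathcal{F}\}$ has cardinality at most $|\mathcal{C}_k|\cdot|\mathcal{C}_{k-1}| \leq |\mathcal{C}_k|^2$, and each element has norm at most $\epsilon_k + \epsilon_{k-1} = 3\epsilon_k$ by the triangle inequality. Massart's finite-class lemma then yields
\begin{equation*}
\mathbb{E}_\epsilon \sup_f \frac{1}{n}\sum_i \epsilon_i\bigl(\pi_k(f) - \pi_{k-1}(f)\bigr)(x_i) \;\leq\; \frac{3\epsilon_k}{n}\sqrt{2 \log |\mathcal{C}_k|^2} \;\leq\; \frac{6\epsilon_k}{n}\sqrt{\log \mathcal{N}(\mathcal{F}_{|S}, \|\cdot\|_F, \epsilon_k)}.
\end{equation*}

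Third, I would convert the resulting sum $\sum_{k=1}^{K} \frac{6\epsilon_k}{n}\sqrt{\log \mathcal{N}(\cdots, \epsilon_k)}$ into the stated integral by using $\epsilon_k = 2(\epsilon_k - \epsilon_{k+1})$ together with the monotonicity of $\epsilon \mapsto \sqrt{\log \mathcal{N}(\mathcal{F}_{|S}, \|\cdot\|_F, \epsilon)}$: each summand is bounded by $\frac{12}{n}\int_{\epsilon_{k+1}}^{\epsilon_k}\sqrt{\log \mathcal{N}(\mathcal{F}_{|S}, \|\cdot\|_F, \epsilon)}\,d\epsilon$, and the summed integrals collapse to $\frac{12}{n}\int_{\epsilon_{K+1}}^{\epsilon_0}(\cdots)\,d\epsilon$, which is controlled by $\frac{12}{n}\int_\alpha^{\sqrt{n}}(\cdots)\,d\epsilon$ after absorbing the factor of two in the $\alpha/\sqrt{n}$ term. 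Taking the infimum over $\alpha > 0$ delivers the claim.

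The main obstacle is constant bookkeeping. The conceptual structure — chaining, Massart, and Riemann-sum comparison — is standard, but aligning the dyadic endpoints so that the integral lower limit is exactly $\alpha$ (rather than $\alpha/2$), and matching the specific coefficients $4$ and $12$, requires some care in how the tail term $\epsilon_K$ is bounded against $\alpha$ and how the relation $\epsilon_k \leq 2(\epsilon_{k-1} - \epsilon_k)$ is deployed in the final Riemann comparison; different conventions here produce the same integral up to absolute constants, and the stated form is recovered by slightly generous rounding in these two steps.
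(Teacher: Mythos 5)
The paper does not actually prove this lemma: it is imported verbatim as the classical Dudley entropy integral bound (it is Lemma A.5 of the spectrally-normalized margin bounds paper \cite{bartlett2017spectrally} that the surrounding appendix follows), so there is no in-paper argument to compare against. Your chaining proof is the standard and correct derivation of it: the dyadic scales $\epsilon_k=2^{-k}\sqrt{n}$ with $\mathcal{C}_0=\{0\}$ (valid because $\mathcal{F}_{|S}\subseteq[0,1]^n$ has Euclidean diameter at most $\sqrt{n}$), the telescoping decomposition, Cauchy--Schwarz for the residual, Massart's lemma applied to the difference classes of size at most $|\mathcal{C}_k|^2$ and radius $3\epsilon_k$, and the Riemann-sum comparison via $\epsilon_k=2(\epsilon_k-\epsilon_{k+1})$ and monotonicity of $\epsilon\mapsto\mathcal{N}(\cdot,\epsilon)$ are all sound and produce exactly the $6\epsilon_k n^{-1}\sqrt{\log\mathcal{N}(\epsilon_k)}$ per-level terms and the factor $12$. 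The one place you wave your hands --- getting the lower integration limit to be $\alpha$ rather than $\epsilon_{K+1}\geq\alpha/2$ --- is resolved by the standard reparametrization: prove the bound with $2\alpha'/\sqrt{n}$ and lower limit $\alpha'/2$, then substitute $\alpha'=2\alpha$, which yields the stated $4\alpha/\sqrt{n}$ and lower limit $\alpha$; since the lemma is only used inside a $\tilde{\mathcal{O}}$ bound, any absolute constants would in fact suffice. No gap.
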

In the following, we begin to show the proof of Theorem~\ref{theorem_gb}.

 Consider the network class $\mathcal F_\gamma$ obtained by appending the ramp loss $l_\gamma$ and the margin operator $\mathcal M$ to the output of the given network class:
	\begin{equation*}
		\mathcal{F}_\gamma:=\{(x,y)\to l_\gamma(\mathcal{M}(f(x),y):f\in\mathcal{F}\}.
	\end{equation*}
	Since the function $(x, y) \to l_\gamma(\mathcal M(x, y))$ is $2/\gamma$-Lipschitz with respect to $\|\cdot\|_2$ under the definition of $\gamma$, the function class $\mathcal F_\gamma$ still falls within the setting of Lemma~\ref{lemma7} and thereby yields
	\begin{equation*}
		\ln\mathcal N(\mathcal H_S,\|\cdot\|_2,\epsilon) \nonumber\leq\frac{4\|X\|_F^2\ln(2D)
		\prod_{j=1}^{L}c_j^2s_j^2}{\gamma^2\epsilon^2}\left(\bar{\alpha}^3\right):=\frac{R}{\epsilon^2}.
	\end{equation*}
	
	The Dudley entropy integral bound on the Rademacher complexity from Lemma~\ref{lem_Dudley} yields:
	\begin{equation*}
		\mathfrak{R}_S(\mathcal F_\gamma)\leq \inf_{\alpha>0}\left(\frac{4\alpha}{\sqrt{n}}+\frac{12}{n}\int_{\alpha}^{\sqrt{n}}\sqrt{\frac{R}{\epsilon^2}}d\epsilon\right)
		=\inf_{\alpha>0}\left(\frac{4\alpha}{\sqrt{n}}+\ln(\sqrt{n}/\alpha)\frac{12\sqrt{R}}{n}\right).
	\end{equation*}
	The inf is uniquely attained at point $\alpha:=3\sqrt{R/n}$; for simplicity, we can choose $\alpha=1/\sqrt{n}$, and then substitute the obtained Rademacher complexity estimate into Theorem~\ref{thm 1}.

\end{document}